\documentclass[1p, 10pt]{elsarticle}

\makeatletter
\def\ps@pprintTitle{%
 \let\@oddhead\@empty
 \let\@evenhead\@empty
 \def\@oddfoot{}%
 \let\@evenfoot\@oddfoot}
\makeatother

\usepackage[latin1]{inputenc}
\usepackage{amsmath,bm}
\usepackage{amsfonts}
\usepackage{color}
\usepackage{amssymb}
\usepackage{graphicx,subfigure,caption}
\usepackage{algpseudocode, algorithm}
\usepackage{setspace}
\usepackage{lipsum}
\usepackage{cite}
\usepackage{array,multirow}
\usepackage{stfloats}
\usepackage{array,arydshln}

\setlength\dashlinedash{0.3pt}
\setlength\dashlinegap{4.5pt}
\setlength\arrayrulewidth{0.3pt}

\captionsetup{belowskip=-6pt,aboveskip=3pt}

\addtolength{\oddsidemargin}{-.7in}
\addtolength{\evensidemargin}{-.7in}
\addtolength{\textwidth}{1.4in}
\addtolength{\topmargin}{-.875in}
\addtolength{\textheight}{1.75in}

\newcommand{\bxi}{{\bf x_i}}
\newcommand{\byi}{{\bf y_i}}
\newcommand{\by}{{\bf y}}

\newcommand{\bb}{{\bf b}}

\newcommand{\bV}{{\bf V}}
\newcommand{\bU}{{\bf U}}

\newcommand{\bX}{{\bf X}}
\newcommand{\bI}{{\bf I}}
\newcommand{\bY}{{\bf Y}}
\newcommand{\bA}{{\bf A}}
\newcommand{\bB}{{\bf B}}
\newcommand{\bC}{{\bf C}}
\newcommand{\bD}{{\bf D}}
\newcommand{\bZ}{{\bf Z}}

\newcommand{\bW}{{\bf W}}
\newcommand{\bS}{{\bf S}}

\newcommand{\bE}{{\bf E}}
\newcommand{\bH}{{\bf H}}

\newcommand{\bei}{{\boldsymbol \epsilon_i}}
\newcommand{\bsig}{{\boldsymbol \Sigma}}

\usepackage{amsthm}

\newtheoremstyle{exa}
  {\topsep} 
  {0 in} 
  {\itshape} 
  {} 
  {\bfseries} 
  {.} 
  {.5em} 
  {} 

\theoremstyle{definition}
\newtheorem{mydef}{Definition}
\theoremstyle{exa}
\newtheorem{mytheo}{Theorem}
\theoremstyle{exa}
\newtheorem{myprop}{Proposition}
\theoremstyle{plain}
\newtheorem{mylemma}{Lemma}

\DeclareMathOperator*{\argmin}{\arg\!\min}

\newcommand{\spcell}[2][c]{%
  \begin{tabular}[#1]{@{}c@{}}#2\end{tabular}}

\begin{document}
\begin{frontmatter}
\author{Milad Kharratzadeh, Mark Coates \\ Electrical and Computer Engineering Department, McGill University \\  Montreal, Quebec, Canada \\ milad.kharratzadeh@mail.mcgill.ca, mark.coates@mcgill.ca}

\title{Sparse Multivariate Factor Regression}
\begin{abstract}
We consider the problem of multivariate regression in a setting where the relevant predictors could be shared among different responses. We propose an algorithm which decomposes the coefficient matrix into the product of a long matrix and a wide matrix, with an elastic net penalty on the former and an $\ell_1$ penalty on the latter. The first matrix linearly transforms the predictors to a set of latent factors, and the second one regresses the responses on these factors. Our algorithm simultaneously performs dimension reduction and coefficient estimation and automatically estimates the number of latent factors from the data. Our formulation results in a non-convex optimization problem, which despite its flexibility to impose effective low-dimensional structure, is difficult, or even impossible, to solve exactly in a reasonable time. We specify an optimization algorithm based on alternating minimization with three different sets of updates to solve this non-convex problem and provide theoretical results on its convergence and optimality. Finally, we demonstrate the effectiveness of our algorithm via experiments on simulated and real data.
\end{abstract}
\begin{keyword}
Multivariate Regression; Sparse; Low Rank; Alternating Optimization; Convergence
\end{keyword}
\end{frontmatter}
\section{Introduction}
Multivariate regression analysis, also known as multiple--output regression, is concerned with modelling the relationships between a set of real--valued output vectors, known as responses, and a set of real--valued input vectors, known as predictors or features. The multivariate responses are measured over the same set of predictors and are often correlated. Hence, the goal of multivariate regression is to exploit these dependencies to learn a predictive model of responses based on an observed set of input vectors paired with corresponding outputs. Multiple--output regression can also be seen as an instance of the problem of multi--task learning, where each task is defined as predicting individual responses based on the same set of predictors. The multivariate regression problem is encountered in numerous fields including finance~\citep{Lee10}, computational biology~\citep{Mon99}, geostatistics~\citep{Ran08}, chemometrics~\citep{Wold01}, and neuroscience~\citep{Har03}.

In this paper, we are interested in multivariate regression tasks where it is reasonable to believe that the responses are related to factors, each of which is a sparse linear combination of the predictors. Our model further assumes that the relationships between the factors and the responses are sparse. This type of structure occurs in a number of applications and we provide two examples in later sections. 

Given $p$--dimensional predictors $\bxi = (x_{i1}, \ldots, x_{ip})^T \in \mathbb{R}^p$ and $q$--dimensional responses $\byi = (y_{i1}, \ldots, y_{iq})^T \in \mathbb{R}^q$ for the $i$-th sample, we assume there is a linear relationship between the inputs and outputs as follows:
\begin{equation} \label{eq:mv}
\byi = \bD^T \bxi + \bei, \qquad i=1, \ldots, N,
\end{equation}
where $\bD_{p\times q}$ is the regression coefficient matrix and $\bei = (\epsilon_{i1}, \ldots, \epsilon_{iq})$ is the vector of errors for the $i$-th sample. We can combine these $N$ equations into a single matrix formula:
\begin{equation} \label{eq:mv2}
\bY = \bX \bD + \bE,
\end{equation}
where $\bX$ denotes the $n\times p$ matrix of predictors with $\bxi^T$ as its $i$-th row,  $\bY$ denotes the $n\times q$ matrix of responses with $\byi^T$ as its $i$-th row, and  $\bE$ denotes the $n\times q$ matrix of errors with $\bei^T$ as its $i$-th row. For $q=1$, this multivariate linear regression model reduces to the well--known, univariate linear regression model.

 We assume that the columns of $\bX$ and $\bY$ are centred and hence the intercept terms are omitted and the columns of $\bX$ are normalized. We also assume that the error vectors for $N$ samples are iid Gaussian random vectors with zero mean and covariance $\bsig$, i.e. $\bei \sim \mathcal{N}({\bf 0}, \bsig), i=1,\ldots,N$.
 
In the absence of additional structure, many standard procedures for solving (\ref{eq:mv2}), such as linear regression and principal component analysis, are not consistent unless $p/n\to 0$. Thus, in a high-dimensional setting where $p$ is comparable to or greater than $n$, we need to  impose some low-dimensional structure on the coefficient matrix. For instance, element-wise sparsity can be imposed by constraining the $\ell_1$ norm of the coefficient matrix, $\| \bD\|_{1,1}$~\citep{Tib11, Wai09}. This regularization is equivalent to solving $q$ separate univariate lasso regressions for every response; thus we consider tasks separately. Another way to introduce sparsity is to consider the mixed $\ell_1/\ell_{\gamma}$ norms ($\gamma> 1$). In this approach (sometimes called group lasso), the mixed norms impose a block-sparse structure where each row is either all zero or mostly zeros. Particular examples, among many other works, include results using the $\ell_1/\ell_{\infty}$ norm~\citep{Tur05, Zha08}, and the $\ell_1/\ell_2$ norm~~\citep{Yua06, Obo11}. Also, there are the so-called ``dirty'' models which are superpositions of simpler low-dimensional structures such as element-wise and row-wise sparsity~\citep{Jal10, pen10} or sparsity and low rank~\citep{Chen11, Cha11}.  

Another approach is to impose a constraint on the rank of the coefficient matrix. In this approach, instead of constraining the regression coefficients directly, we can apply penalty functions on the rank of $\bD$, its singular values and/or its singular vectors~\citep{pou13,Chen12,Chun10,Yua07,Rein98}. These algorithms belong to a broad family of dimension-reduction methods known as {\it linear factor regression}, where the responses are regressed on a set of {\it factors} achieved by a linear transformation of the predictors. The coefficient matrix is decomposed into two matrices: $\bD = \bA_{p\times m}\bB_{m\times q}$. Matrix $\bA$ transforms the predictors into $m$ latent factors, and matrix $\bB$ determines the factor loadings. 

\subsection*{Our contributions}
Here, we propose a novel algorithm which performs sparse multivariate factor regression (SMFR). We jointly estimate matrices $\bA$ and $\bB$ by minimizing the mean-squared error, $\|\bY-\bX\bA\bB\|_F^2$, with an elastic net penalty on $\bA$ (which promotes grouping of correlated predictors and the interpretability of the factors)  and an $\ell_1$ penalty on $\bB$ (which enhances the accuracy and interpretability of the predictions). We provide a formulation to estimate the  number of effective latent factors, $m$. To the best of our knowledge, our work is the first to strive for low-dimensional structure by imposing sparsity on both factoring and loading matrices as well as the grouping of the correlated predictors. This can result in a set of interpretable factors and loadings with high predictive power; however, these benefits come at the cost of a non-convex objective function. Most current approaches for multivariate regression  solve a convex problem (either through direct formulation or by relaxation of a non-convex problem) to impose low-dimensional structures on the coefficient matrix. Although non-convex formulations, such as the one introduced here, can be employed to achieve very effective representations in the context of multivariate regression, there are few theoretical performance guarantees for optimization schemes solving such problems. We formulate our problem in Section~\ref{sec:setup}. In Section~\ref{sec:theory}, we propose an alternating minimization scheme with three sets of updates to solve our problem and provide theoretical guarantees for its convergence and optimality. We show that under mild conditions on the predictor matrix, every limit point of the minimization algorithm is a stationary point of the objective function and if the starting point is close enough to a local or global minimum, our algorithm converges to that point. Through analysis of simulations on synthetic datasets in Section~\ref{sec:synth} and two real-world datasets in Section~\ref{sec:real}, we show that compared to other multivariate regression algorithms, our proposed algorithm can provide a more effective representation of the data, resulting in a higher predictive power.

\subsection*{Related Methods}\label{sec:rel}
Many multivariate regression techniques impose a low-dimensional structure on the coefficient matrix. Element-wise sparsity, here noted as LASSO, is the most common approach where the cost function is defined as $\|\bD\|_{1,1}$~\citep{Tib11, Wai09}. An extension of LASSO to the multivariate case is the row-wise sparsity with the $\ell_1/\ell_2$ norm as the cost function: $\|\bD\|_{1,2}$~\citep{Yua06, Obo11}. Peng et al. proposed a method, called RemMap~\citep{pen10}, which imposes both element-wise and row-wise sparsity and solves the following problem: 
\begin{equation*}
\min_{\bD} \|\bY-\bX\bD\|_F^2 + \lambda_1 \|\bD\|_{1,1} + \lambda_2 \| \bD\|_{1,2}.
\end{equation*}
In an alternative approach, \citep{Chun10} extended the partial least squares (PLS) framework by imposing an additional sparsity constraint and proposed Sparse PLS (SPLS). 

Another common idea is to employ dimensionality reduction techniques to find the underlying latent structure in the data. One of the most basic algorithms in this class is an approach called Reduced Rank Regression (RRR)~\citep{Velu98} where the sum-of-squares error is minimized under the constraint that rank$(\bD) \leq r$ for some $r\leq \min\{p,q\}$. It is easy to show that one can find a closed-form solution for $\bD$ based on the singular value decomposition of $\bY^T\bX(\bX^T\bX)^{-1}\bX^T\bY$. However, similar to least-squares, the solution of this problem without appropriate regularization exhibits poor predictive performance and is not suitable for high-dimensional settings. 
Another popular approach is to use the trace norm as the penalty function:
\begin{equation}
\min_{\bD} \|\bY-\bX\bD\|_F^2 + \lambda \sum_{j=1}^{\min\{p,q\}} \sigma_j(\bD),
\end{equation}
where $\sigma_j(\bD)$ denotes the $j$'th singular value of $\bD$. The trace norm regularization has been extensively studied in the literature~\citep{Yua07, Arg08, Ji09, Zha12}. It imposes sparsity in the singular values of $\bD$ and therefore, results in a low-dimensional solution (higher values of $\lambda$ correspond to achieving solutions of lower rank).

Many papers study problems of the following form:
\begin{equation}
\min_{\bD} \|\bY-\bX\bD\|_F^2 + g(\bD) \ \textrm{  s.t.  \ rank}(\bD)\leq r\leq \min\{p,q\},
\end{equation}
where $g(\bD)$ is a regularization function over $\bD$. For instance, in~\citep{Mukh11}, a ridge penalty is proposed with $g(\bD) = \lambda\|\bD\|_F^2$. Often it is assumed that $\bD = \bA_{p\times r} \bB_{r\times q}$ (and thus rank$(\bA)\leq r$, rank$(\bB)\leq r$, and consequently rank$(\bD)\leq r$) and the problem is formulated in terms of $\bA$ and $\bB$. In~\citep{Kum12}, $g(\bA,\bB)=\lambda_1\|\bA\|_{1,1} + \lambda_2 \|\bB\|_F^2$.  An algorithm called Sparse Reduced Rank Regression (SRRR) is proposed in \citep{Chen12} and further studied in~\citep{Ma14}, where $g(\bA,\bB)=\lambda\|\bA\|_{1,2}$ with an extra constraint that $\bB\bB^T = {\bf I}$. In~\citep{Arg08}, $g(\bA,\bB) = \lambda\|\bB\|_{2,1}^2$ with an extra constraint that $\bA^T\bA = {\bf I}$, and it is assumed that $p\leq q$ and $r=p$. Dimension reduction is achieved by the constraint on $\bB$ which forces many rows to be zero which consequently cancels the effects of the corresponding columns in $\bA$.

%

Our problem formulation differs in three important ways: (i) sparsity constraints are imposed on {\it both} $\bA$ and $\bB$; (ii) the elastic net penalty enables us to control the level of sparsity for each matrix separately and also provides the grouping of correlated predictors; and (iii) the number of factors is determined directly, without the need for cross-validation. We will discuss the second and third aspects in detail in the next section. The first difference has substantial consequences; when decomposing the coefficient matrix into two matrices, the first matrix has the role of aggregating the input signals to form the latent factors and the second matrix performs a multivariate regression on these factors. Imposing sparsity on $\bA$ enhances the variable selection as well as the interpretability of the achieved factors. Also, as originally motivated by LASSO, we would like to impose the sparsity constraint on $\bB$ in order to improve the interpretability and prediction performance. 

Imposing sparsity on both $\bA$ and $\bB$ means that for our model to make sense for a specific problem, the outputs should be related in a sparse way to a common set of factors that are derived as a sparse combination of the inputs. For example, in analyzing the S\&P 500\footnote{Standard and Poors index of 500 large-cap US equities http://ca.spindices.com/indices/equity/sp-500} stocks, it is well-known that returns exhibit much stronger correlations for companies that belong to the same industry sector. The memberships of each sector are not always so clear, because a company may have several diverse activities that generate revenue. So if we are using just concurrent stock returns to try to predict those of other companies, it is reasonable to assume that factors representing industry sectors should appear~\citep{Chou12}. Since we do not expect the main sectors to overlap much, a company will not be present in many factors; so, it is reasonable to assume that $\bA$ is sparse. Moreover, most companies will only be predicted by one or two such factors, so it makes sense that $\bB$ is also sparse.


%

There is a connection between Sparse PCA~\citep{Zou06s} and our algorithm, in the case where $\bY$ is replaced by $\bX$ (i.e., $\bX$ is regressed on itself). We investigate this relationship in Section~\ref{SPCA}.  

\section{Problem Setup}\label{sec:setup}


 In this work, we introduce a novel low-dimensional structure where we decompose $\bD$ into the product of two sparse matrices $\bA_{p\times m}$ and $\bB_{m\times q}$ where $m < \min (p,q)$. 
 This decomposition can be interpreted as first identifying a set of $m$ factors which are derived by some linear transformation of the predictors (through matrix $\bA$) and then identifying the  transformed regression coefficient matrix $\bB$ to estimate the responses from these $m$ factors. We provide a framework to find $m$, the  number of effective latent factors, as well as the transforming and regression matrices, $\bA$ and $\bB$. For a fixed $m$, define:
\begin{equation}
\widehat{\bA}_m, \widehat{\bB}_m = \displaystyle \argmin_{\bA_{p\times m}, \bB_{m\times q}}  f(\bA, \bB), \label{eq:sub2}
\end{equation} 
where
\begin{equation}
f(\bA,\bB) \!=\! \frac{1}{2} \| \bY\!-\!\bX\bA\bB\|_F^2 \!+\! \lambda_1 \|\bA\|_{1,1} \!+\! \lambda_2 \|\bB\|_{1,1}\!+\! \lambda_3 \|\bA\|_{F}^2.  \label{eq:sub3} 
\end{equation}
Then, we solve the following optimization problem:
\begin{equation}\label{eq:sub1} 
 \widehat{m} \!= \!\max (m)\leq r   \text{ \ s.t. \  rank$(\widehat{\bA}_m) \!=\! $ rank}(\widehat{\bB}_m) \!=\! m, 
\end{equation}
where $r$ is a problem-specific bound on the number of factors. We then choose $\widehat{\bA}_{\widehat{m}}$ and $\widehat{\bB}_{\widehat{m}}$ as solutions. Thus, we find the maximum number of factors such that the solution of (\ref{eq:sub2}) has full rank factor and loading matrices. In other words, we find the maximum $m$ such that the best possible regularized reconstruction of responses, i.e., the solution of (\ref{eq:sub2}), results in a model where the factors (columns of of $\widehat{\bA}$) and their contributions to the responses (rows of $\widehat{\bB}$) are linearly independent. To achieve this, we initialize $\bA$ to have $r$ columns, $\bB$ to have $r$ rows, and set $m=r$, solve the problem (\ref{eq:sub2})--(\ref{eq:sub3}), check for the full rank condition; if not satisfied, set $m=m-1$, and repeat the process until we identify an $m$ that satisfies the rank condition.

In the remainder of this section, we first discuss the desirable properties resulting from the choice of our penalty function, and then explain the reasoning behind how we estimate $m$.

\subsection{Controlling Sparsity Levels in Matrices $\bA$ and $\bB$ Separately}
Consider the case where $\lambda_3=0$. We have:
\begin{align}
\min_{\bA,\bB} f(\bA,\bB) & = \min_{\bA,\bB} \frac{1}{2} \| \bY-\bX\bA\bB\|_F^2 + \lambda_1 \|\bA\|_{1,1} + \lambda_2 \|\bB\|_{1,1} \label{eq:first_opt} \\
& = \min_{\bA',\bB', c} \frac{1}{2} \| \bY-\bX\bA'\bB'\|_F^2 + \frac{\lambda_1}{c} \|\bA'\|_{1,1} + \lambda_2c \|\bB'\|_{1,1} \label{eq:mid_opt}\\
& = \min_{\bA',\bB'} \frac{1}{2} \| \bY-\bX\bA'\bB'\|_F^2 + \sqrt{\lambda_1\lambda_2\|\bA'\|_{1,1}\|\bB'\|_{1,1}}, \label{eq:last_opt}
\end{align}
where $0\leq c$, $\bA = \bA'/c$ and $\bB' = c\bB$. In problem (\ref{eq:last_opt}), we do not have any separate control over the sparsity levels of $\bA'$ and $\bB'$. If $(\bA',\bB')$ is a solution to the last problem, then there is a $c^*$ for which $(\bA'/c^*, c^*\bB)$ is a solution to the first problem. Therefore, we cannot control the sparsity levels of the two matrices by just including two $\ell_{1}$ norms. Incorporating the elastic net penalty for $\bA$ resolves this issue immediately since the equivalence between optimization problems (\ref{eq:first_opt}) and (\ref{eq:last_opt}) does not hold any more.

\subsection{Grouping of Correlated Features}

In this section, we show the $i$'th row of a matrix $\bX$ by $\bX_{i\cdot}$ and its $j$'th column by $\bX_{\cdot j}$. Remember that matrix $\bA$ has the role of combining the relevant features to form the latent factors which will be used later in the second layer by matrix $\bB$ for estimating the outputs. If there are two highly correlated features we expect them to be grouped together in forming the factors. In other words, we expect them to be both present in a factor or both absent. Inspired by Theorem 1 in the original paper of Zou and Hastie on elastic net~\citep{Zou05enet}, we prove in this section that elastic net penalty enforces the grouping of correlated features in forming the factors. 

The columns of $\bX$ correspond to different features. We assume that all columns of $\bX$ are centred and normalized. Thus, the correlation between the $i$'th and the $j$'th features is $\rho_{ij} \triangleq \bX_{\cdot i}^T \bX_{\cdot j}$.

\begin{mylemma}
Consider solving the following problem for given $\lambda_1$ and $\lambda_3$:
\begin{align}
\widehat{\bA} = \argmin_{\bA} f(\bA,\bB) = \argmin_{\bA} \frac{1}{2} \| \bY-\bX\bA\bB\|_F^2 + \lambda_1 \|\bA\|_{1,1} + \lambda_3 \|\bA\|_F^2.
\end{align}
Then, if $\widehat{\bA}_{ik}\widehat{\bA}_{jk}>0$, we have:
\begin{align}
\frac{2\lambda_3}{\|\bY\|_F\|\bB_{k\cdot}\|_F} |\widehat{\bA}_{ik}-\widehat{\bA}_{jk}| \leq \sqrt{2(1-\rho_{ij})}
\end{align}
\end{mylemma}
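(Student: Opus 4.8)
The plan is to imitate the proof of Theorem~1 of Zou and Hastie and work with the first-order optimality conditions at $\widehat{\bA}$. Because $\widehat{\bA}_{ik}\widehat{\bA}_{jk}>0$, the entries $\widehat{\bA}_{ik}$ and $\widehat{\bA}_{jk}$ are both nonzero and share a common sign, so $f(\cdot,\bB)$ is differentiable with respect to $\bA_{ik}$ and $\bA_{jk}$ in a neighbourhood of $\widehat{\bA}$; hence at a minimizer the corresponding partial derivatives must vanish. Writing $\widehat{\bR}\triangleq\bY-\bX\widehat{\bA}\bB$ for the residual matrix and using $\nabla_{\bA}\tfrac12\|\bY-\bX\bA\bB\|_F^2=-\bX^T(\bY-\bX\bA\bB)\bB^T$, together with $\tfrac{\partial}{\partial\bA_{ik}}\lambda_1\|\bA\|_{1,1}=\lambda_1\operatorname{sign}(\bA_{ik})$ and $\tfrac{\partial}{\partial\bA_{ik}}\lambda_3\|\bA\|_F^2=2\lambda_3\bA_{ik}$, the stationarity condition in coordinate $(i,k)$ reads
\[
-\,\bX_{\cdot i}^T\,\widehat{\bR}\,\bB_{k\cdot}^T+\lambda_1\operatorname{sign}(\widehat{\bA}_{ik})+2\lambda_3\widehat{\bA}_{ik}=0,
\]
and the analogous identity holds with $i$ replaced by $j$.

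Subtracting the two equations, the $\lambda_1$ terms cancel because the two signs coincide, leaving
\[
2\lambda_3\bigl(\widehat{\bA}_{ik}-\widehat{\bA}_{jk}\bigr)=(\bX_{\cdot i}-\bX_{\cdot j})^T\,\widehat{\bR}\,\bB_{k\cdot}^T .
\]
The next step is to bound the right-hand side in absolute value. Viewing it as a Frobenius inner product, $(\bX_{\cdot i}-\bX_{\cdot j})^T\widehat{\bR}\,\bB_{k\cdot}^T=\langle(\bX_{\cdot i}-\bX_{\cdot j})\bB_{k\cdot},\,\widehat{\bR}\rangle_F$, a single Cauchy--Schwarz inequality gives
\[
\bigl|(\bX_{\cdot i}-\bX_{\cdot j})^T\widehat{\bR}\,\bB_{k\cdot}^T\bigr|\le \|\bX_{\cdot i}-\bX_{\cdot j}\|_2\,\|\bB_{k\cdot}\|_2\,\|\widehat{\bR}\|_F .
\]
Since the columns of $\bX$ are centred and normalized, $\|\bX_{\cdot i}-\bX_{\cdot j}\|_2^2=2-2\rho_{ij}$, which produces exactly the factor $\sqrt{2(1-\rho_{ij})}$.

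The only remaining ingredient is the crude residual bound $\|\widehat{\bR}\|_F\le\|\bY\|_F$. This follows from optimality by comparing with the feasible point $\bA=\mathbf{0}$: $f(\widehat{\bA},\bB)\le f(\mathbf{0},\bB)=\tfrac12\|\bY\|_F^2$, and since $\lambda_1\|\widehat{\bA}\|_{1,1}$ and $\lambda_3\|\widehat{\bA}\|_F^2$ are nonnegative, $\tfrac12\|\widehat{\bR}\|_F^2\le\tfrac12\|\bY\|_F^2$. Combining the three displays and dividing through by $\|\bY\|_F\,\|\bB_{k\cdot}\|_F$ (noting $\|\bB_{k\cdot}\|_F=\|\bB_{k\cdot}\|_2$ for a single row) yields the claimed inequality. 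I do not anticipate a serious obstacle: the only delicate points are justifying that the objective is genuinely differentiable in the two active coordinates at $\widehat{\bA}$ (which is precisely what the hypothesis $\widehat{\bA}_{ik}\widehat{\bA}_{jk}>0$ guarantees) and chaining the matrix--vector norm inequalities with the correct dimensions so that it is the Frobenius norm of the \emph{residual}, rather than a larger quantity, that appears in the final bound.
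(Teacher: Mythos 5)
Your proposal is correct and follows essentially the same route as the paper's own proof: stationarity in the two active coordinates (justified by $\widehat{\bA}_{ik}\widehat{\bA}_{jk}>0$), subtraction so the $\lambda_1$ terms cancel, Cauchy--Schwarz on $(\bX_{\cdot i}-\bX_{\cdot j})^T\widehat{\bR}\,\bB_{k\cdot}^T$, the identity $\|\bX_{\cdot i}-\bX_{\cdot j}\|_2=\sqrt{2(1-\rho_{ij})}$ from normalization, and the comparison with $\bA=\mathbf{0}$ to bound the residual by $\|\bY\|_F$. No gaps; your handling of the $\tfrac12$ factor in $f(\mathbf{0},\bB)$ is in fact slightly more careful than the paper's displayed inequality.
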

This lemma says, for instance, that if the correlation between features $i$ and $j$ is really high (i.e., $\rho_{ij} \approx 1$), then the difference between their corresponding weights in forming the $k$'th factor, $|\widehat{\bA}_{ik}-\widehat{\bA}_{jk}|$, would be very close to 0. If $\bX_{\cdot i}$ and $\bX_{\cdot j}$ are negatively correlated, we can state the same lemma for $\bX_{\cdot i}$ and $-\bX_{\cdot j}$ and use $|\rho_{ij}|$.\\

\begin{proof}
The condition $\widehat{\bA}_{ik}\widehat{\bA}_{jk}>0$ means that $\widehat{\bA}_{ik}\neq 0$, $\widehat{\bA}_{jk}\neq 0$, and $\text{sign}(\widehat{\bA}_{ik}) = \text{sign}(\widehat{\bA}_{jk})$. So, we have:
\begin{align}
\frac{\partial f}{\partial \bA_{ik}}\Bigr|_{\bA_{ik} = \widehat{\bA}_{ik}} & = (-\bX^T\bY\bB^T)_{ik} +  (\bX^T\bX\widehat{\bA}\bB\bB^T)_{ik} +  \lambda_1 \text{sign}(\widehat{\bA}_{ik}) + 2\lambda_3 \widehat{\bA}_{ik}  \\ 
& = -(\bX_{\cdot i})^T (\bY\bB^T)_{\cdot k} + (\bX_{\cdot i})^T (\bX\widehat{\bA}\bB\bB^T)_{\cdot k} +  \lambda_1 \text{sign}(\widehat{\bA}_{ik}) + 2\lambda_3 \widehat{\bA}_{ik} \\
\frac{\partial f}{\partial \bA_{jk}}\Bigr|_{\bA_{jk}=\widehat{\bA}_{jk}} & = (-\bX^T\bY\bB^T)_{jk} +  (\bX^T\bX\widehat{\bA}\bB\bB^T)_{jk} +  \lambda_1 \text{sign}(\widehat{\bA}_{jk}) + 2\lambda_3 \widehat{\bA}_{jk} \\
& = -(\bX_{\cdot j})^T (\bY\bB^T)_{\cdot k} + (\bX_{\cdot j})^T (\bX\widehat{\bA}\bB\bB^T)_{\cdot k} +  \lambda_1 \text{sign}(\widehat{\bA}_{jk}) + 2\lambda_3 \widehat{\bA}_{jk}
\end{align}
Due to the optimality of $\widehat{\bA}$, both derivatives are equal to zero. Equating the two equations, we get:
\begin{align}
2\lambda_3 (\widehat{\bA}_{ik} - \widehat{\bA}_{jk}) = (\bX_{\cdot i} - \bX_{\cdot j})^T (\bY - \bX\widehat{\bA}\bB) (\bB_{k \cdot})^T
\end{align}
Thus, 
\begin{align}
2\lambda_3 |\widehat{\bA}_{ik} - \widehat{\bA}_{jk}| & = |(\bX_{\cdot i} - \bX_{\cdot j})^T (\bY - \bX\widehat{\bA}\bB) (\bB_{k \cdot})^T| \\
& \leq \| \bX_{\cdot i} - \bX_{\cdot j} \|_F  \|\bY - \bX\widehat{\bA}\bB\|_F \|\bB_{k \cdot}\|_F
\end{align}
Since the columns of $\bX$ are normalized, we have $\| \bX_{\cdot i} - \bX_{\cdot j} \|_F = \sqrt{2(1-\rho_{ij})}$. Also, by definition, we have:
\begin{align}
f(\widehat{\bA},\bB) \leq f({\bf 0},\bB) \Rightarrow \|\bY - \bX\widehat{\bA}\bB\|^2_F + \lambda_1 \|\widehat{\bA}\|_{1,1} + \lambda_3 \|\widehat{\bA}\|_{1,1} \leq \|\bY\|_F^2 \Rightarrow \|\bY - \bX\widehat{\bA}\bB\|_F \leq \|\bY\|_F
\end{align}
Combining all these together gives the desired result. 
\end{proof}

\subsection{Estimating the number of effective factors}
In choosing $m$, we want to avoid both overfitting (large $m$) and lack of sufficient learning power (small $m$). In general, we only require $m\leq\min(p,q)$; however, in practical settings where $p$ and $q$ are very large, we impose an upper bound on $m$ to have a reasonable number of factors and avoid overfitting. This upper bound, denoted by $r$, is problem-specific and should be chosen by the programmer. For instance, continuing our example of analysing the S\&P 500 stocks, most economists identify 10-15 primary financial sectors, so a choice of $r=15$ or $20$ is reasonable since we expect the factors representing industries to appear, but we allow the algorithm to find the optimal $m$ from the data. In order to have the maximum learning power, we find the maximum $m\leq r$ for which the solutions satisfy our rank conditions. This motivates starting with $m=r$ and decreasing it until the conditions hold (as opposed to starting with $m=1$ and increasing the dimension).

The full rank conditions are employed to guarantee a good estimate of the number of ``effective" factors. An effective factor explains some aspect of the response data but cannot be constructed as a linear combination of other factors (otherwise it is superfluous). We therefore require the estimated factors to be linearly independent. In addition, we require that the rows of $\bB$, which determine how the factors affect the responses, are linearly independent. If we do not have this latter independence, we could reduce the number of factors and still obtain the same relationship matrix $\bD$, so at least one of the factors is superfluous. By enforcing that $\bA$ and $\bB$ are full rank, we make sure that the estimated factors are linearly independent in both senses, and thus $\widehat{m}$ is a good estimate of the number of effective factors. 

In estimating the number of factors, we differ fundamentally from the common approach in literature. Setting aside the differences in the choice of regularization, most algorithms minimize a cost function as in (\ref{eq:sub2}) for a fixed $m$ {\it without} the rank condition in (\ref{eq:sub1}). They then use cross-validation to find the optimal $\widetilde{m}$~\citep{Yua07, Rein98, Chen12,pen10}. The criterion in choosing $\widetilde{m}$ is thus the cross-validation error. In contrast, we strive to find the largest $\widehat{m}$ such that the solutions of (\ref{eq:sub2}) have full rank and the estimated factors are linearly independent. Finding  $\widetilde{m}$ via cross-validation may result in non-full rank solutions with linearly dependent factors. Therefore, some factors can be expressed as linear combinations of other factors and can be viewed as redundant. Including redundant factors (via a non-full rank $\bA$) could help to improve the sparsity of $\bB$, but our goal is to have the minimum necessary set of factors, not to have a sparse $\bB$ at any cost. Later, with experiments on synthetic and real data, we show that our approach towards choosing the number of factors results in better predictive performance as well as more interpretable factors compared to other techniques that apply cross-validation.

\section{Optimization Technique and Theoretical Results} \label{sec:theory}
The optimization problem defined in~(\ref{eq:sub2}--\ref{eq:sub1}) is not a convex problem  and it is difficult, if not impossible, to solve exactly (i.e., to find the global optimum) in polynomial time. Therefore, we have to employ heuristic algorithms, which may or may not converge to  a stationary solution~\citep{Pow73}. In this section, we propose an alternating minimization algorithm with three different sets of updates, and provide theoretical results for each of them. 

\subsection{Optimization}
For a fixed $m$, the objective function in (\ref{eq:sub2}) is biconvex in $\bA$ and $\bB$; it is not convex in general, but is convex if either $\bA$ or $\bB$ is fixed. Let us define $\bC = (\bA, \bB)$. To solve (\ref{eq:sub2}) for a fixed $m$, we perform Algorithm 1, with an arbitrary, non-zero starting value~$\bC_0 = (\bA_0, \bB_0)$ (see Section~\ref{sec:synth} for a discussion on the choice of the starting point).
The stopping criterion is related to the convergence of the value of function $f$, not the convergence of its arguments. In our experiments, we assume $f$ has converged, if $\frac{|f_i - f_{i+1}|}{f_i} < \epsilon$ where the default value of the tolerance parameter, $\epsilon$, is $1E-5$; i.e., the algorithm stops if the relative changes in $f$ are less than $0.001\%$.

\begin{algorithm}[!hb]
\caption{Solving problem (\ref{eq:sub2}) for fixed $m$}
\begin{spacing}{1.1}
\begin{algorithmic}
\State $\bA \gets \bA_0$, $\bB \gets \bB_0$, $i \gets 0$ 
\While {stopping criterion not satisfied} \vspace*{-.2 in}
\State \begin{equation} \bB_{i+1} \gets \text{update $\bB$ with $\bA$ fixed at $\bA_i$} \end{equation} 
\State \vspace*{-.2 in}\begin{equation} \bA_{i+1}\! \gets \text{update $\bA$ with $\bB$ fixed at $\bB_{i+1}$} \end{equation} 
\State $i \gets i+1$ 
\EndWhile
\State $\widehat{\bA}, \widehat{\bB} \gets $ \ values of $\bA$ and $\bB$ at convergence
\end{algorithmic}
\end{spacing}
\end{algorithm}

We consider three different types of updates:
\begin{itemize}
\item Basic updates:
\begin{align}
\bB_{i+1} & \leftarrow  \argmin_B \frac{1}{2} \| \bY-\bX\bA_i\bB\|_F^2 + \lambda_2 \|\bB\|_{1,1} \label{eq:opt1}\\
\bA_{i+1} & \leftarrow  \argmin_A \frac{1}{2} \| \bY-\bX\bA\bB_{i+1}\|_F^2 + \lambda_3 \|\bA\|_F^2  + \lambda_1 \|\bA\|_{1,1} \label{eq:opt2}
\end{align}
\item Proximal updates:
\begin{align}
\bB_{i+1} & \leftarrow  \argmin_B \frac{1}{2} \| \bY-\bX\bA_i\bB\|_F^2 + \lambda_2 \|\bB\|_{1,1} + \beta_i \|\bB-\bB_{i}\|_F^2 \\
\bA_{i+1} & \leftarrow  \argmin_A \frac{1}{2} \| \bY-\bX\bA\bB_{i+1}\|_F^2 + \lambda_3 \|\bA\|_F^2  + \lambda_1 \|\bA\|_{1,1} + \alpha_i \|\bA-\bA_i\|_F^2
\end{align}
where $\alpha_{min}\leq\alpha_i\leq\alpha_{max}$ and $\beta_{min}\leq\beta_i\leq\beta_{max}$; i.e., they are bounded from both sides.
\item Prox-linear updates: 
\begin{align}
\bB_{i+1} & \leftarrow S_{\lambda_2/\beta_i}(\widetilde{\bB}_i - g(\bA_i, \widetilde{\bB}_i) / \beta_i) \\
\bA_{i+1} & \leftarrow S_{\lambda_1/\alpha_i}(\widetilde{\bA}_i - h(\widetilde{\bA}_i,\bB_{i+1}) / \alpha_i)
\end{align}
where $S$ is the soft-thresholding function, $S_{\tau}(\nu) = \text{sign}(\nu) \max(|\nu|-\tau,0)$, and:
\begin{align}
g(\bA,\bB) & = \frac{\partial}{\partial \bB} \left(\frac{1}{2} \| \bY-\bX\bA\bB\|_F^2\right) = -\bA^T\bX^T\bY + \bA^T\bX^T\bX\bA\bB \\
h(\bA,\bB) & = \frac{\partial}{\partial \bA} \left(\frac{1}{2} \| \bY-\bX\bA\bB\|_F^2 + \lambda_3\|\bA\|_F^2\right) = -\bX^T\bY\bB^T+\bX^T\bX\bA\bB\bB^T + 2\lambda_3\bA
\end{align}
are the derivatives of $f(\bA,\bB)$ without the $\ell_1$ penalties. Also, $\alpha_i$ and $\beta_i$ are multipliers that have to be greater than or equal to the Lipschitz constants of $h(\bA,\bB_{i+1})$ and $g(\bA_i,\bB)$ respectively. Since: 
\begin{align}
\|g(\bA_i,\bB) - g(\bA_i,\bB')\|_F & = \|\bA_i^T\bX^T\bX\bA_i(\bB-\bB')\|_F \leq \|\bA_i^T\bX^T\bX\bA_i\|_F\|\bB-\bB'\|_F \label{eq:Lip1}\\
\|h(\bA,\bB_{i+1}) - h(\bA',\bB_{i+1})\|_F & = \|\bX^T\bX(\bA-\bA')\bB_{i+1}\bB_{i+1}^T + 2\lambda_3(\bA-\bA')\|_F \label{eq:Lip2}\\
& \leq (\|\bX\bX^T\|_F\|\bB_{i+1}\bB_{i+1}^T\|_F + 2\lambda_3) \|\bA-\bA'\|_F, \label{eq:Lip3}
\end{align}
we can set
\begin{align}
\beta_i & = \|\bA_i^T\bX^T\bX\bA_i\|_F \\
\alpha_i & = \|\bX\bX^T\|_F\|\bB_{i+1}\bB_{i+1}^T\|_F + 2\lambda_3
\end{align}

Finally:
\begin{align}
\widetilde{\bB}_i & = \bB_i + \omega_i^B (\bB_i - \bB_{i-1}) \\
\widetilde{\bA}_i & = \bA_i + \omega_i^A (\bA_i - \bA_{i-1}) \\
\omega_i^A, \omega_i^B & \in [0,1)
\end{align}
are extrapolations which are used to accelerate the convergence. (See Section 4.3 of~\citep{Par14} for more details.) For our algorithm, we choose $$\omega_i^A = \min(\frac{t_{i-1} - 1}{t_i}, \delta_{\omega}\frac{\sqrt{\alpha_{i-1}}}{\sqrt{\alpha_{i}}}); \quad \omega_i^B = \min(\frac{t_{i-1} - 1}{t_i}, \delta_{\omega}\frac{\sqrt{\beta_{i-1}}}{\sqrt{\beta_{i}}}); \quad t_i = (1+\sqrt{1+4t_{i-1}^2})/2, \quad t_0=1,$$ for some $\delta_{\omega}<1$. The inequalities $\omega_i^A < \frac{\sqrt{\alpha_{i-1}}}{\sqrt{\alpha_{i}}}$ and $\omega_i^B < \frac{\sqrt{\beta_{i-1}}}{\sqrt{\beta_{i}}}$ are necessary for establishing  convergence.

The prox-linear updates need an extra step. If $f(\bA_{i+1},\bB_{i+1})\geq f(\bA_{i},\bB_{i})$, i.e., no descent, the two optimization steps are repeated with no extrapolation ($\omega_i=0$). This is also necessary for convergence.
\end{itemize}

\subsection{Theoretical Results}
The theoretical aspects of the alternating minimization (also known as block coordinate descent) algorithms have been extensively studied in a wide range of settings with different convexity and differentiability assumptions~\citep{Pow73, Gri00, Tse01, Tse09, Att09, Att10, Xu13, Xu14}. A full review of this vast literature is beyond the scope of this paper. Here, we focus on the recent work of Xu and Yin~\citep{Xu13} which considers a setting that matches the problem we study in this paper (block multi-convex objective with non-smooth penalty). Xu and Yin show that the solutions of proximal and prox-linear updates described above converge to a stationary point of the objective function and if the starting point is close to a global minimum, the alternating scheme converges globally. This conclusion is possible because the objective function in our problem is the sum of a real analytic function and a semi-algebraic function and satisfies the  Kurdyka--Lojasiewicz (KL) property. However, their results on convergence do not apply to the basic updates, because although the objective $\|\bY-\bX\bA\bB\|_F^2$ is convex in $\bB$, it is not strongly convex. In this section, we first present the results for proximal and prox-linear updates from~\citep{Xu13} and then provide some novel results for the basic update. 

\subsubsection{Definitions and convergence of the objective}

\begin{mydef}
$\bC^* = (\bA^*, \bB^*)$ is called a {\it partial optimum} of $f$ if 
\begin{eqnarray}
f(\bA^*,\bB^*) \ \ \leq & \!f(\bA^*,\bB ), \ \forall \bB\in\mathbb{R}^{m\times q} \\ \text{and} \quad f(\bA^*,\bB^*) \ \   \leq & \! f(\bA,\bB^* ), \ \ \forall \bA\in\mathbb{R}^{n\times m}.
\end{eqnarray}
\end{mydef}
Note that a stationary point of the objective is a partial optimum but the reverse does not generally hold; see Section 4 of~\citep{Wen12} for an example.
\begin{mydef}
A point $\bC^*$ is an {\it accumulation point} or a {\it limit point} of a sequence $\{\bC_i\}_{i\in\mathbb{N}}$, if for any neighbourhood $V$ of $\bC^*$, there are infinitely many $j\in\mathbb{N}$ such that $\bC_j\in V$. Equivalently, $\bC^*$ is the limit of a subsequence of $\{\bC_i\}_{i\in\mathbb{N}}$.
\end{mydef}


\begin{myprop}\label{proposition}
The sequence $f(\bA_i, \bB_i)$ generated by Algorithm 1 converges monotonically.
\end{myprop}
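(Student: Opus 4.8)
The plan is to show that the scalar sequence $\{f(\bA_i,\bB_i)\}$ is (i) bounded below and (ii) non-increasing, and then to invoke the monotone convergence theorem for real sequences. Boundedness below is immediate: since $f(\bA,\bB)=\tfrac12\|\bY-\bX\bA\bB\|_F^2+\lambda_1\|\bA\|_{1,1}+\lambda_2\|\bB\|_{1,1}+\lambda_3\|\bA\|_F^2$ is a sum of nonnegative terms (the penalty parameters being nonnegative), we have $f\geq 0$ everywhere, hence $f(\bA_i,\bB_i)\geq 0$ for all $i$.

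For monotonicity I would verify the two per-block inequalities $f(\bA_i,\bB_{i+1})\leq f(\bA_i,\bB_i)$ and $f(\bA_{i+1},\bB_{i+1})\leq f(\bA_i,\bB_{i+1})$ separately for each of the three update schemes; chaining them gives $f(\bA_{i+1},\bB_{i+1})\leq f(\bA_i,\bB_i)$. For the basic updates this is trivial: $\bB_{i+1}$ globally minimizes the convex map $\bB\mapsto f(\bA_i,\bB)$, so comparing its value with that at $\bB=\bB_i$ gives the first inequality, and symmetrically $\bA_{i+1}$ minimizes $\bA\mapsto f(\bA,\bB_{i+1})$. For the proximal updates, $\bB_{i+1}$ minimizes $f(\bA_i,\bB)+\beta_i\|\bB-\bB_i\|_F^2$; evaluating this augmented objective at the feasible point $\bB=\bB_i$ and using $\beta_i>0$ yields $f(\bA_i,\bB_{i+1})\leq f(\bA_i,\bB_{i+1})+\beta_i\|\bB_{i+1}-\bB_i\|_F^2\leq f(\bA_i,\bB_i)$, and likewise for $\bA$ with $\alpha_i$.

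The prox-linear updates are the only case requiring care, and this is where I expect the main (if modest) effort to lie. Because of the extrapolation points $\widetilde{\bA}_i,\widetilde{\bB}_i$, a single prox-linear step need not decrease $f$; however, Algorithm~1 contains the safeguard that whenever $f(\bA_{i+1},\bB_{i+1})\geq f(\bA_i,\bB_i)$ the iteration is recomputed with $\omega_i^A=\omega_i^B=0$. With zero extrapolation $\widetilde{\bA}_i=\bA_i$ and $\widetilde{\bB}_i=\bB_i$, so the updates reduce to ordinary proximal-gradient steps with step sizes $1/\beta_i$ and $1/\alpha_i$. Since $\beta_i$ and $\alpha_i$ were chosen to dominate the Lipschitz constants of $g(\bA_i,\cdot)$ and $h(\cdot,\bB_{i+1})$ (cf.\ (\ref{eq:Lip1})--(\ref{eq:Lip3})), the standard descent lemma for proximal-gradient methods gives $f(\bA_i,\bB_{i+1})\leq f(\bA_i,\bB_i)$ and $f(\bA_{i+1},\bB_{i+1})\leq f(\bA_i,\bB_{i+1})$ — in fact with decrease at least $\tfrac{\beta_i}{2}\|\bB_{i+1}-\bB_i\|_F^2+\tfrac{\alpha_i}{2}\|\bA_{i+1}-\bA_i\|_F^2$. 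Thus, after applying the safeguard when needed, $f(\bA_{i+1},\bB_{i+1})\leq f(\bA_i,\bB_i)$ in all three schemes.

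Combining the two facts, $\{f(\bA_i,\bB_i)\}$ is a non-increasing sequence of reals bounded below by $0$, hence it converges to $f^\star=\inf_i f(\bA_i,\bB_i)\geq 0$, which is the claim. No information about the limit points or about $f$ beyond nonnegativity and the descent property is needed; the only delicate point is the prox-linear case, and it is dispatched entirely by the descent-lemma estimate together with the non-extrapolated safeguard iteration built into the algorithm.
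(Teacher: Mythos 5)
Your proof is correct and follows essentially the same route as the paper's: $f\geq 0$ gives a lower bound, each block update does not increase $f$, and the monotone convergence theorem finishes the argument. The paper's own proof is terser (it only cites the descent of the two minimization steps, phrased for the basic updates), whereas you explicitly verify the descent property for all three update schemes, including the prox-linear case via the descent lemma and the non-extrapolated safeguard — a more complete treatment of the same idea.
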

The value of  $f$ is always positive and is reduced in each of the two main steps of Algorithm 1. Thus, it is guaranteed that the stopping criterion of Algorithm 1 will be reached. 

\subsubsection{Proximal and prox-linear updates}
Since the sequence of solutions generated by the alternating minimization stays in a bounded, closed, and hence compact set, it has at least one accumulation point. Assuming that the parameters for the updates are chosen as explained above, we can state the following theorem about the accumulation points of the sequence of solutions. 
\begin{mytheo}
(Theorem 2.3 from~\citep{Xu13}). For a given starting point $\bC_0 = (\bA_0, \bB_0)$, let $\{\bC_i\}_{i\in\mathbb{N}}$ denote the sequence of solutions generated by proximal or prox-linear updates. Then, any accumulation points of the sequence $\bC_i$ is a partial minimum of $f$.
\end{mytheo}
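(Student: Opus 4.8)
The plan is to combine the monotone, bounded, hence convergent sequence $f(\bA_i,\bB_i)$ (Proposition~\ref{proposition}; note $f\ge 0$) with a \emph{sufficient-decrease} estimate, and then pass to the limit in the per-block optimality conditions. For the proximal updates the sufficient decrease is immediate: testing the $\bB$-subproblem at $\bB=\bB_i$ gives $f(\bA_i,\bB_i)-f(\bA_i,\bB_{i+1})\ge\beta_i\|\bB_{i+1}-\bB_i\|_F^2\ge\beta_{min}\|\bB_{i+1}-\bB_i\|_F^2$, and testing the $\bA$-subproblem at $\bA=\bA_i$ gives $f(\bA_i,\bB_{i+1})-f(\bA_{i+1},\bB_{i+1})\ge\alpha_{min}\|\bA_{i+1}-\bA_i\|_F^2$. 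For the prox-linear updates the analogous bound follows from the descent lemma for proximal-gradient steps (valid because $\alpha_i,\beta_i$ are chosen above the Lipschitz constants in~(\ref{eq:Lip1})--(\ref{eq:Lip3})), together with the extrapolation bounds $\omega_i^A<\sqrt{\alpha_{i-1}}/\sqrt{\alpha_i}$, $\omega_i^B<\sqrt{\beta_{i-1}}/\sqrt{\beta_i}$ and the restart-without-extrapolation rule; this is exactly the content of the relevant lemmas of~\citep{Xu13}, which I would quote rather than reprove. Summing the decrease over $i$ and using $\inf_i f>-\infty$ yields $\sum_i\big(\|\bA_{i+1}-\bA_i\|_F^2+\|\bB_{i+1}-\bB_i\|_F^2\big)<\infty$, so $\|\bA_{i+1}-\bA_i\|_F\to 0$ and $\|\bB_{i+1}-\bB_i\|_F\to 0$. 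Hence, along any subsequence with $\bC_{i_k}\to\bC^*=(\bA^*,\bB^*)$, we also have $\bC_{i_k+1}\to\bC^*$, and for the prox-linear scheme $\widetilde\bA_{i_k}\to\bA^*$, $\widetilde\bB_{i_k}\to\bB^*$ as well (since $\omega_i^A,\omega_i^B\in[0,1)$).

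For the proximal updates I would then pass to the limit in the variational inequality defining $\bB_{i+1}$: for every $\bB$,
\[
\tfrac12\|\bY-\bX\bA_i\bB_{i+1}\|_F^2+\lambda_2\|\bB_{i+1}\|_{1,1}+\beta_i\|\bB_{i+1}-\bB_i\|_F^2\ \le\ \tfrac12\|\bY-\bX\bA_i\bB\|_F^2+\lambda_2\|\bB\|_{1,1}+\beta_i\|\bB-\bB_i\|_F^2 .
\]
Passing to a further subsequence so that $\beta_{i_k}\to\bar\beta\in[\beta_{min},\beta_{max}]$, using continuity of every term and $\|\bB_{i_k+1}-\bB_{i_k}\|_F\to 0$, the left side tends to $\tfrac12\|\bY-\bX\bA^*\bB^*\|_F^2+\lambda_2\|\bB^*\|_{1,1}$ and the right to $\tfrac12\|\bY-\bX\bA^*\bB\|_F^2+\lambda_2\|\bB\|_{1,1}+\bar\beta\|\bB-\bB^*\|_F^2$. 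The leftover term $\bar\beta\|\bB-\bB^*\|_F^2$ is removed by the standard convexity trick: apply the inequality with $\bB$ replaced by $\bB^*+t(\bB-\bB^*)$, $t\in(0,1]$, bound the value there of the convex map $\bB\mapsto\tfrac12\|\bY-\bX\bA^*\bB\|_F^2+\lambda_2\|\bB\|_{1,1}$ by the corresponding convex combination, cancel, divide by $t$, and let $t\downarrow 0$. This gives $f(\bA^*,\bB^*)\le f(\bA^*,\bB)$ for all $\bB$, and the symmetric argument for the $\bA$-subproblem (which additionally carries the convex terms $\lambda_1\|\bA\|_{1,1}+\lambda_3\|\bA\|_F^2$) gives $f(\bA^*,\bB^*)\le f(\bA,\bB^*)$ for all $\bA$, i.e.\ $\bC^*$ is a partial minimum in the sense of the definition above. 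For the prox-linear updates the limit argument is shorter: the soft-thresholding step is equivalent to the inclusion $0\in g(\bA_i,\widetilde\bB_i)+\beta_i(\bB_{i+1}-\widetilde\bB_i)+\lambda_2\,\partial\|\cdot\|_{1,1}(\bB_{i+1})$; since $g$ is continuous, $\beta_i$ is bounded, $\|\bB_{i+1}-\widetilde\bB_i\|_F\to 0$, and the subdifferential of $\|\cdot\|_{1,1}$ has closed graph, passing to the limit gives $0\in g(\bA^*,\bB^*)+\lambda_2\,\partial\|\cdot\|_{1,1}(\bB^*)$, which is the first-order optimality condition of the convex problem $\min_{\bB}f(\bA^*,\bB)$; hence $\bB^*$ globally minimizes $f(\bA^*,\cdot)$, and symmetrically $\bA^*$ globally minimizes $f(\cdot,\bB^*)$, again a partial minimum.

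I expect the main obstacle to be the sufficient-decrease estimate for the prox-linear scheme: the extrapolated iterates $\widetilde\bA_i,\widetilde\bB_i$ need not decrease $f$, so one cannot telescope directly; instead one combines the descent of the prox-linear surrogate along $\widetilde\bB_i\mapsto\bB_{i+1}$ with the gap $\|\widetilde\bB_i-\bB_i\|_F\le\omega_i^B\|\bB_i-\bB_{i-1}\|_F$ and the restart rule to obtain a bound of the form $f(\bC_i)-f(\bC_{i+1})\ge c\|\bC_{i+1}-\bC_i\|_F^2-c'\|\bC_i-\bC_{i-1}\|_F^2$, which becomes square-summable only after the no-descent restart is invoked. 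Since this is established in~\citep{Xu13} under exactly the parameter choices used here, I would import it as a black box; the only genuinely new verification is that our $f$ fits their template --- $\|\bY-\bX\bA\bB\|_F^2$ is block-multiconvex with block-Lipschitz partial gradients on bounded sets, and $\lambda_1\|\bA\|_{1,1}$, $\lambda_3\|\bA\|_F^2$, $\lambda_2\|\bB\|_{1,1}$ are proper, lsc, convex --- which is routine. A secondary, purely technical point is that $\alpha_i,\beta_i$ are only bounded, not convergent; this is handled by the further-subsequence passage in the proximal argument and is irrelevant in the prox-linear argument, where $\beta_i(\bB_{i+1}-\widetilde\bB_i)\to 0$ holds regardless.
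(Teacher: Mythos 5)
The paper does not actually prove this theorem: it is imported verbatim as Theorem~2.3 of \citep{Xu13}, and the only work the paper does around it is to note that the iterates stay in a bounded, closed, hence compact set (so accumulation points exist) and, for the subsequent convergence theorem, to verify the Lipschitz-gradient and Kurdyka--Lojasiewicz hypotheses. Your proposal goes further and reconstructs the internal argument of the cited result, and the reconstruction is sound: the sufficient-decrease inequality for the proximal updates (obtained by testing each subproblem at the previous iterate), the resulting square-summability of $\|\bC_{i+1}-\bC_i\|_F^2$, the passage to the limit in the variational inequality with the $t\downarrow 0$ convexity trick to strip off the residual quadratic, and, for the prox-linear scheme, the closed-graph argument on $0\in g(\bA_i,\widetilde\bB_i)+\beta_i(\bB_{i+1}-\widetilde\bB_i)+\lambda_2\,\partial\|\cdot\|_{1,1}(\bB_{i+1})$ together with convexity of $f(\bA^*,\cdot)$ are all correct, and you rightly use boundedness of the iterates to keep $\alpha_i,\beta_i$ bounded. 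You also correctly identify the one genuinely delicate step---sufficient decrease in the presence of extrapolation, which needs the bounds $\omega_i^A<\sqrt{\alpha_{i-1}/\alpha_i}$, $\omega_i^B<\sqrt{\beta_{i-1}/\beta_i}$ and the no-descent restart rule---and defer it to \citep{Xu13}, which is exactly the dependence the paper itself retains. What your route buys is an explicit check that the paper's $f$ fits the Xu--Yin template (block-multiconvex smooth part with block-Lipschitz gradients per (\ref{eq:Lip1})--(\ref{eq:Lip3}), plus proper lsc convex separable regularizers), which the paper asserts only implicitly; what the paper's citation-only route buys is brevity and the stronger conclusions (global convergence under KL) that come packaged with the same reference.
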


The next theorem states that under some assumptions, if the sequence $\bC_i$ has a finite accumulation point, it converges to that point. 

\begin{mytheo}
(Theorem 2.8 from~\citep{Xu13}). For a given starting point $\bC_0 = (\bA_0, \bB_0)$, let $\{\bC_i\}_{i\in\mathbb{N}}$ denote the sequence of solutions generated by proximal or prox-linear updates and assume that it has a finite accumulation point $\bC^*$. Assuming that $\nabla(\frac{1}{2} \| \bY\!-\!\bX\bA\bB\|_F^2 \!+\! \lambda_3 \|\bA\|_{F}^2)$ is Lipschitz continuous and $f$ satisfies the Kurdyka--Lojasiewicz (KL) property at $\bC^*$, $\{\bC_i\}_{i\in\mathbb{N}}$ converges to $\bC^*$. 
\end{mytheo}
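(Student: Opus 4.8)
The statement is Theorem~2.8 of~\citep{Xu13} specialized to our $f$, and the argument follows the now-standard Kurdyka--Lojasiewicz (KL) template. The whole proof reduces to establishing that the iterates have \emph{finite length}, i.e. $\sum_{i}\|\bC_{i+1}-\bC_i\|_F<\infty$: since the iterates lie in the compact set mentioned above, finite length makes $\{\bC_i\}$ a Cauchy sequence, hence convergent, and as $\bC^*$ is assumed to be an accumulation point the limit is forced to equal $\bC^*$. Finite length is obtained by combining three ingredients — (i) a sufficient-decrease bound, (ii) a relative-error (subgradient) bound, and (iii) the KL inequality at $\bC^*$ — in the usual way.

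For (i) I would prove $f(\bC_i)-f(\bC_{i+1})\geq c_1\bigl(\|\bC_{i+1}-\bC_i\|_F^2-\theta\|\bC_i-\bC_{i-1}\|_F^2\bigr)$ for some $c_1>0$, $\theta\in[0,1)$. For the proximal updates this holds with $\theta=0$ and $c_1=\min(\alpha_{\min},\beta_{\min})$, directly from the proximal terms $\alpha_i\|\bA-\bA_i\|_F^2$, $\beta_i\|\bB-\bB_i\|_F^2$. For the prox-linear updates the extrapolations $\widetilde\bA_i,\widetilde\bB_i$ create the lagged term; one controls it using the descent lemma, the fact that $\alpha_i,\beta_i$ dominate the Lipschitz constants of $h(\cdot,\bB_{i+1})$ and $g(\bA_i,\cdot)$ (which is what (\ref{eq:Lip1})--(\ref{eq:Lip3}) are for), the prescribed caps $\omega_i^A<\sqrt{\alpha_{i-1}/\alpha_i}$ and $\omega_i^B<\sqrt{\beta_{i-1}/\beta_i}$, and the ``no descent $\Rightarrow$ restart with $\omega_i=0$'' safeguard. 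For (ii) I would write the first-order optimality conditions of the $\bB$- and $\bA$-subproblems (zero lies in the subdifferential of each subproblem objective), isolate an element $\bW_{i+1}\in\partial f(\bC_{i+1})$, and use the assumed Lipschitz continuity of $\nabla\bigl(\tfrac12\|\bY-\bX\bA\bB\|_F^2+\lambda_3\|\bA\|_F^2\bigr)$ to bound $\mathrm{dist}(0,\partial f(\bC_{i+1}))\leq c_2(\|\bC_{i+1}-\bC_i\|_F+\|\bC_i-\bC_{i-1}\|_F)$; the second term again comes from the extrapolation and is absent for the plain proximal updates.

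For (iii) and the conclusion: $f(\bC_i)$ is nonincreasing by Proposition~\ref{proposition}, and since $f$ is continuous and $\bC^*$ is an accumulation point, $f(\bC_i)\downarrow f(\bC^*)$. If $f(\bC_i)=f(\bC^*)$ at some finite step, (i) forces the sequence to be eventually constant and we are done; otherwise $f(\bC_i)>f(\bC^*)$ throughout. I would then apply the uniformized KL property on a ball $B(\bC^*,\rho)$ with a concave desingularizing function $\phi$, and argue by induction (bootstrapping the finite-length estimate) that the iterates never leave this ball. On the ball, concavity of $\phi$ gives $\phi(f(\bC_i)-f(\bC^*))-\phi(f(\bC_{i+1})-f(\bC^*))\geq (f(\bC_i)-f(\bC_{i+1}))/\mathrm{dist}(0,\partial f(\bC_i))$; feeding in (i), (ii) and the inequality $2\sqrt{uv}\leq u+v$ yields $\|\bC_{i+1}-\bC_i\|_F\leq \tfrac12\|\bC_i-\bC_{i-1}\|_F+c_3\bigl(\phi(f(\bC_i)-f(\bC^*))-\phi(f(\bC_{i+1})-f(\bC^*))\bigr)$. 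Summing telescopes the $\phi$-terms and absorbs half of the displacements, proving $\sum_i\|\bC_{i+1}-\bC_i\|_F<\infty$.

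The main obstacle is the prox-linear case: obtaining the sufficient-decrease inequality (i) in the presence of extrapolation while simultaneously keeping the iterates inside the KL neighbourhood. This is precisely why $\alpha_i,\beta_i$ must dominate the relevant Lipschitz constants, why the extrapolation weights are capped below $\sqrt{\alpha_{i-1}/\alpha_i}$ and $\sqrt{\beta_{i-1}/\beta_i}$, and why the restart safeguard is imposed; for the proximal updates none of this is needed and the argument is comparatively routine.
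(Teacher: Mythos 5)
Your proposal is a correct reconstruction of the argument, but it is worth being clear that the paper itself does not prove this statement at all: it imports Theorem~2.8 wholesale from \citep{Xu13} and confines its own contribution to checking that the two hypotheses hold for this particular $f$ --- Lipschitz continuity of the gradient of the smooth part via the bounds in (\ref{eq:Lip1})--(\ref{eq:Lip3}), and the Kurdyka--Lojasiewicz property because $f$ is a sum of a real-analytic function and a semi-algebraic function. What you have written is essentially the proof that lives inside the cited reference: the standard KL template (sufficient decrease with a possibly lagged term, a relative-error bound on $\mathrm{dist}(0,\partial f(\bC_{i+1}))$, the uniformized KL inequality with a concave desingularizing function, and the telescoping argument that yields finite length, hence a Cauchy sequence converging to the assumed accumulation point $\bC^*$). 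Your account of why the extrapolation caps $\omega_i^A<\sqrt{\alpha_{i-1}/\alpha_i}$, $\omega_i^B<\sqrt{\beta_{i-1}/\beta_i}$ and the no-descent restart are needed in the prox-linear case is accurate and is precisely the delicate point in Xu and Yin's analysis. So there is no gap in your reasoning relative to the actual underlying proof; the difference is one of scope: the paper's route buys brevity by delegating the hard analysis to the reference, while yours buys self-containedness at the cost of having to carry out the bootstrapping argument that keeps the iterates inside the KL neighbourhood, which you have sketched but not executed in detail. If you intend your version to stand as a full proof rather than an outline, that induction (iterates remain in $B(\bC^*,\rho)$ while the finite-length estimate is being established) is the one step you would still need to write out carefully.
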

Both of these conditions are held. From (\ref{eq:Lip1})--(\ref{eq:Lip3}) we get the Lipschitz continuity. Also since $f$ is the sum of real analytic and semi-algebraic functions, it satisfies the KL property (see Section 2.2 of~\citep{Xu13} for details of the KL property). 

Finally, it is shown in ~\citep[Lemma 2.6]{Xu13} that if the staring point is sufficiently close to a global minimizer of $f$, then the sequence of solutions will converge to that point.

\subsubsection{Basic updates}
The results of the previous section do not hold for the basic updates because  the objective is not strongly convex in $\bB$. In this part, we provide some novel results for the basic update with the proofs in the Appendix. 
\begin{mytheo}\label{theo:uniqueness}
If the entries of $\bX \in \mathbb{R}^{n \times p}$ are drawn from a continuous probability distribution on $\mathbb{R}^{np}$, then: \\ (i) The solution of  (\ref{eq:opt1}) is unique if $\bA_i$ is full rank. \\ (ii) The objective of (\ref{eq:opt2}) is strongly convex and its solution, if one exists, is unique.
\end{mytheo}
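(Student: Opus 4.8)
I would prove part (ii) first, since it is short and, unlike part (i), does not use the continuous-distribution hypothesis at all. Write the objective of (\ref{eq:opt2}) as $\phi(\bA)=\psi(\bA)+\lambda_3\|\bA\|_F^2$, where $\psi(\bA)=\tfrac12\|\bY-\bX\bA\bB_{i+1}\|_F^2+\lambda_1\|\bA\|_{1,1}$. The first term of $\psi$ is a convex quadratic precomposed with the linear map $\bA\mapsto\bX\bA\bB_{i+1}$ (hence convex) and $\lambda_1\|\bA\|_{1,1}$ is a norm (hence convex), so $\psi$ is convex; since $\lambda_3>0$, the term $\lambda_3\|\bA\|_F^2=\lambda_3\|\mathrm{vec}(\bA)\|_2^2$ is $2\lambda_3$-strongly convex, and adding a convex function preserves strong convexity, so $\phi$ is $2\lambda_3$-strongly convex on $\mathbb{R}^{p\times m}$. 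A strongly convex function has at most one minimizer (the midpoint of two distinct minimizers would strictly decrease it), and since $\phi(\bA)\ge\lambda_3\|\bA\|_F^2$ is coercive and continuous a minimizer does exist; this proves (ii). It is precisely the absence of such an $\|\cdot\|_F^2$ term in (\ref{eq:opt1}) that forces the separate argument below.

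For part (i) the plan has two ingredients: (a) decouple (\ref{eq:opt1}) into $q$ standard lasso problems, and (b) apply the known uniqueness theory for the lasso. For (a), set $\bM=\bX\bA_i\in\mathbb{R}^{n\times m}$; then the objective of (\ref{eq:opt1}) equals $\sum_{j=1}^{q}\bigl(\tfrac12\|\bY_{\cdot j}-\bM\bB_{\cdot j}\|_2^2+\lambda_2\|\bB_{\cdot j}\|_1\bigr)$, so the minimizer for $\bB$ is unique if and only if each single-column problem $\min_{b}\tfrac12\|\bY_{\cdot j}-\bM b\|_2^2+\lambda_2\|b\|_1$ has a unique minimizer. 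For (b), I would invoke the standard lasso facts (Tibshirani): if the columns of the design matrix are in general position then the lasso minimizer is unique for every response and every $\lambda_2>0$; and if the entries of the design matrix are drawn from a distribution that is absolutely continuous with respect to Lebesgue measure, then its columns are in general position with probability one. Thus everything reduces to showing that the entries of $\bM=\bX\bA_i$ are drawn from such a continuous distribution whenever the entries of $\bX$ are and $\bA_i$ has full column rank $m$.

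This transfer is the one genuinely non-routine step. The map $\bX\mapsto\bX\bA_i$ is linear, and in vectorized form $\mathrm{vec}(\bX\bA_i)=(\bA_i^{T}\otimes\bI_n)\,\mathrm{vec}(\bX)$; because $\mathrm{rank}(\bA_i^{T}\otimes\bI_n)=\mathrm{rank}(\bA_i)\cdot n=mn$, this is a surjective linear map $\mathbb{R}^{np}\to\mathbb{R}^{nm}$. A surjective linear map sends an absolutely continuous measure to an absolutely continuous one (the preimage of a Lebesgue-null set under a surjective linear map is Lebesgue-null), so the joint law of the entries of $\bM$ is continuous on $\mathbb{R}^{nm}$; applying the lasso uniqueness result to each of the $q$ columns then gives a unique $\widehat{\bB}$ with probability one, which is (i). Equivalently one can avoid measure theory and observe that ``the columns of $\bM$ are not in general position'' is a finite union of zero sets of polynomials in the entries of $\bM$, each of which — by surjectivity of $\bX\mapsto\bX\bA_i$ — pulls back to a non-trivial polynomial in the entries of $\bX$ and hence to a Lebesgue-null set. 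The only point that requires care is that it is genuinely full column rank of $\bA_i$, not merely $\bA_i\neq0$, that makes this vectorized map surjective: if $\mathrm{rank}(\bA_i)<m$ the image is a proper subspace of $\mathbb{R}^{nm}$ and general position can fail on a set of positive measure.
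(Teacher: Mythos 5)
Your proposal is correct, and its skeleton matches the paper's: part (ii) is the same one-line strong-convexity observation (your added coercivity/existence remark is harmless but not needed, since the statement only claims uniqueness conditional on existence), and for part (i) you, like the paper, decouple (\ref{eq:opt1}) into $q$ column-wise lasso problems and invoke Tibshirani's uniqueness results. The one step where you genuinely diverge is the transfer of the hypothesis from $\bX$ to the effective design matrix $\bX\bA_i$. The paper transfers the \emph{general position} property directly: it proves a lemma stating that if the columns of $\bX$ are in general position (w.p.\ 1) and $\bA_i$ has full column rank, then so are the columns of $\bX\bA_i$, by arguing that an affine dependence $\bX(\bA_j+\sum_l\alpha_l\bA_{i_l})={\bf 0}$ forces a dependence among the columns of $\bA_i$. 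You instead transfer the \emph{continuity of the distribution}: since $\mathrm{vec}(\bX\bA_i)=(\bA_i^T\otimes\bI_n)\mathrm{vec}(\bX)$ and the Kronecker factor has rank $mn$ exactly when $\bA_i$ has full column rank, the law of $\bX\bA_i$ is absolutely continuous on $\mathbb{R}^{nm}$, and Tibshirani's ``continuous distribution $\Rightarrow$ general position'' lemma applies to $\bX\bA_i$ directly. Your route buys a cleaner argument: the paper's lemma, read as a deterministic implication for a fixed $\bX$ in general position, is actually delicate when $p>n$ (a full-column-rank $\bA_i$ can still place $\bA_j+\sum_l\alpha_l\bA_{i_l}$ in the kernel of $\bX$), and its proof leans on a somewhat loose ``holds with non-zero probability iff'' step; your surjectivity/null-set argument avoids that entirely and makes explicit why full column rank, rather than $\bA_i\neq 0$, is the right hypothesis. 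Both proofs share the same unaddressed subtlety that $\bA_i$ is produced by the algorithm and hence is not independent of $\bX$, so neither is more vulnerable on that point.
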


In classical LASSO, the condition on the entries of $\bX$ is sufficient to achieve solution uniqueness~\citep{Tib13}. For LASSO, the continuity is used to argue that the columns of $\bX$ are in {\em general position} with probability 1 (see ~\ref{sec:appThm1}, Definition~\ref{def:generalposition} for a formal definition of general position). The affine span of the columns of $\bX$, $\{\bX_1,\ldots,\bX_{k+1}\}$, has Lebesgue measure $0$ in $\mathbb{R}^n$ for a continuous distribution on $\mathbb{R}^n$, so there is zero probability of drawing $\bX_{k+2}$ in their span. If we multiply $\bX$ by a matrix $\bA$ with full column rank, we retain the same property, and thus the solution of (\ref{eq:opt1}) is unique if $\bA_i$ is full rank. Also, since the objective function is strictly convex in $\bA$ (due to the elastic net property), if (\ref{eq:opt2}) has a solution, its solution is unique. Next, we study the properties of $\widehat{\bC} = (\widehat{\bA}, \widehat{\bB})$ at convergence.

\begin{mytheo}\label{theo:partial}
Assume that the entries of $\bX \in \mathbb{R}^{n \times p}$ are drawn from a continuous probability distribution on $\mathbb{R}^{np}$. For a given starting point $\bA_0$, let $\{\bC_i\}_{i\in\mathbb{N}}$ denote the sequence of solutions generated by Algorithm 1. Then: \\
(i) $\{\bC_i\}_{i\in\mathbb{N}}$ has at least one accumulation point. \\
(ii)  All the accumulation points of $\{\bC_i\}_{i\in\mathbb{N}}$ are partial optima and have the same function value. \\
(iii) If $\bB$ is full rank for all accumulation points of $\{\bC_i\}_{i\in\mathbb{N}}$, then:
\begin{equation} \label{eq:converge}
\lim_{i \rightarrow \infty} \| \bC_{i+1} - \bC_i \| = 0,
\end{equation}
\end{mytheo}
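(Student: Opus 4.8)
The plan is to establish the three parts in turn, using Proposition~\ref{proposition} (monotone convergence of $f(\bC_i)$) and Theorem~\ref{theo:uniqueness} as the main ingredients. For part (i) I would show that the sequence stays in a compact set: evaluating the $\bA$-update~(\ref{eq:opt2}) at the feasible point $\bA=\mathbf{0}$ gives $\lambda_3\|\bA_{i+1}\|_F^2\le\tfrac12\|\bY\|_F^2$, and evaluating the $\bB$-update~(\ref{eq:opt1}) at $\bB=\mathbf{0}$ gives $\lambda_2\|\bB_{i+1}\|_{1,1}\le\tfrac12\|\bY\|_F^2$; since $\lambda_2,\lambda_3>0$ this bounds $\{\bA_i\}$ and $\{\bB_i\}$, so $\{\bC_i\}$ lies in a compact set and Bolzano--Weierstrass yields an accumulation point.

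For part (ii), since $f\ge0$ and $f(\bC_i)$ is nonincreasing it converges to some $f^\star$, and by continuity of $f$ every accumulation point has value $f^\star$, which is the equal-value claim. To show an accumulation point $\bC^\star=(\bA^\star,\bB^\star)=\lim_k\bC_{i_k}$ is a partial optimum, I would pass to the limit in the two blockwise optimality inequalities. Since $\bA_{i_k}$ solves~(\ref{eq:opt2}) for $\bB=\bB_{i_k}$, we have $f(\bA_{i_k},\bB_{i_k})\le f(\bA,\bB_{i_k})$ for all $\bA$, and letting $k\to\infty$ gives $f(\bA^\star,\bB^\star)\le f(\bA,\bB^\star)$. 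For the other block I would bring in the next iterate: $\bB_{i_k+1}$ solves~(\ref{eq:opt1}) for $\bA=\bA_{i_k}$, so $f(\bA_{i_k},\bB_{i_k+1})\le f(\bA_{i_k},\bB)$ for all $\bB$; monotonicity sandwiches $f(\bC_{i_k+1})\le f(\bA_{i_k},\bB_{i_k+1})\le f(\bC_{i_k})$, hence $f(\bA_{i_k},\bB_{i_k+1})\to f^\star$, and the limit of the inequality reads $f(\bA^\star,\bB^\star)\le f(\bA^\star,\bB)$. Thus $\bC^\star$ is a partial optimum; note that no uniqueness of the inner problems is needed for (i) or (ii).

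For part (iii) I would split the claim into $\|\bA_{i+1}-\bA_i\|\to0$ and $\|\bB_{i+1}-\bB_i\|\to0$. The first holds unconditionally: by Theorem~\ref{theo:uniqueness}(ii) the $\bA$-objective is strongly convex with modulus at least $2\lambda_3$, so $f(\bA_i,\bB_{i+1})-f(\bA_{i+1},\bB_{i+1})\ge\lambda_3\|\bA_{i+1}-\bA_i\|_F^2$; combining with $f(\bA_i,\bB_i)\ge f(\bA_i,\bB_{i+1})$ and summing over $i$ gives $\lambda_3\sum_i\|\bA_{i+1}-\bA_i\|_F^2\le f(\bC_0)-f^\star<\infty$, so $\|\bA_{i+1}-\bA_i\|\to0$. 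The second bound is where the full-rank hypothesis is essential. The $\bB$-update solves a problem that is convex but not strongly convex, so although the per-step gap $f(\bA_i,\bB_i)-f(\bA_i,\bB_{i+1})$ tends to $0$, this alone does not stop the $\bB$-iterates from drifting along a flat set of minimizers. Here the hypothesis enters: since the rank-deficient matrices form a closed set, the full-rank property of all accumulation points forces the iterates to stay full rank for all large $i$ (lower semicontinuity of rank), and Theorem~\ref{theo:uniqueness}(i) then makes the $\bB$-update single-valued along that tail; a single-valued argmin of a continuously parametrized, coercive convex problem is continuous, hence uniformly continuous on the compact closure of that tail, and combined with $\|\bA_{i+1}-\bA_i\|\to0$ this forces $\|\bB_{i+1}-\bB_i\|\to0$. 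Equivalently one can argue by contradiction: if $\|\bB_{i_k+1}-\bB_{i_k}\|\ge\delta$ along a subsequence, extract limits $\bC_{i_k}\to(\bA^\star,\bB^\star)$ and $\bB_{i_k+1}\to\bB'$, note that $(\bA^\star,\bB')$ is also an accumulation point with $\|\bB'-\bB^\star\|\ge\delta$, observe that both $\bB^\star$ and $\bB'$ minimize~(\ref{eq:opt1}) at $\bA=\bA^\star$ by closedness of the solution map, and contradict the uniqueness just obtained.

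The step I expect to be the main obstacle is precisely this passage in part (iii) from ``the inner $\bB$-objective gap vanishes'' to ``the $\bB$-iterates stop moving'': it genuinely fails for a merely convex inner problem and is rescued only by the full-rank hypothesis through Theorem~\ref{theo:uniqueness}. Care is also needed to check that full rank really propagates from the accumulation points to a tail of the sequence — so that the $\bB$-update is eventually an honest function rather than a set-valued map — and that the resulting continuity is uniform on the relevant compact set.
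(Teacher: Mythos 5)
Your proof is correct in parts (i) and (ii) and is, at bottom, the same argument as the paper's, just packaged more directly: the paper bounds the iterates via $f(\bC_i)\le f(\bC_0)$ and then formalizes a closed ``algorithmic map'' in the style of Zangwill before deducing partial optimality from closedness plus uniqueness of the $\bA$-update~(\ref{eq:opt2}); you instead pass straight to the limit in the two blockwise optimality inequalities, handling the $\bB$-block with the sandwich $f(\bC_{i_k+1})\le f(\bA_{i_k},\bB_{i_k+1})\le f(\bC_{i_k})$. Your observation that no uniqueness is needed for (ii) is accurate and is a small improvement over the paper's route. For part (iii), your fallback contradiction argument is essentially the paper's proof verbatim; your primary argument is genuinely different and buys something the paper does not state, namely the unconditional estimate $\lambda_3\sum_i\|\bA_{i+1}-\bA_i\|_F^2\le f(\bC_0)-f^\star$ from strong convexity, which yields $\|\bA_{i+1}-\bA_i\|\to 0$ with no rank hypothesis at all. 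The one soft spot -- shared with the paper, so it is inherited rather than introduced by you -- is the rank bookkeeping in the $\bB$-step: Theorem~\ref{theo:uniqueness}(i) makes the update~(\ref{eq:opt1}) single-valued only when the \emph{current $\bA$} has full column rank (so that $\bX\bA$ has columns in general position), whereas the stated hypothesis, and your lower-semicontinuity argument, deliver full rank of $\bB$ at or near accumulation points. The paper's proof simply asserts that ``both $\bA^*$ and $\bB^*$ are full rank,'' so to make either version airtight the hypothesis must be read as also requiring $\bA$ to be full rank at all accumulation points; with that reading, your propagation-to-a-tail argument and your closedness-of-the-solution-map contradiction both go through.
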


Part (i) follows from the fact that the solutions produced by Algorithm 1 are contained in a bounded, closed (and hence compact) set. Although Algorithm 1 converges to a specific value of $f$, this value can be achieved by different values of $\bC$. Thus, the sequence $\bC_i$ can have many accumulation points. Part (ii) of Theorem~\ref{theo:partial} shows that any accumulation point is a partial optimum. Proposition~\ref{proposition} implies that for any given starting point, all the associated accumulation points have the same $f$ value. Under the assumption that $\bB$ is full rank for all accumulation points of $\{\bC_i\}_{i\in\mathbb{N}}$, part (iii) provides a guarantee that the difference between successive solutions of the algorithm converges to zero, for both the factor and loading matrices. Although the condition in (\ref{eq:converge}) does not guarantee the convergence of the sequence $\{\bC_i\}_{i\in\mathbb{N}}$, it is close enough for practical purposes. Also, note that for finding the number of factors, we require  both $\bA$ and $\bB$ to be full rank for the final solution. When $\bB$ is full rank, the solutions to both (\ref{eq:opt1}) and (\ref{eq:opt2}) are unique and thus $\bA$ and $\bB$ will not change in the following iterations, i.e., convergence.

\subsection{Complete Algorithm}
Now, we propose Algorithm 2 to solve the  optimization problem described in (\ref{eq:sub2}--\ref{eq:sub1}) to find the number of latent factors as well as the factor and loading matrices. Optimal values of $\lambda_1$, $\lambda_2$, $\lambda_3$ are found via  5-fold cross-validation. Although the performance of our algorithm with different updates is roughly similar, our simulations show that the prox-linear updates provide the best results both in terms of prediction accuracy and speed of convergence. This could be due to the local extrapolation which helps the algorithm avoid small neighbourhoods around certain local minima. In the following, we present results for the prox-linear updates. 

\begin{algorithm}[!ht]
\caption{Sparse Multivariate Factor Regression (SMFR) via Alternating Minimization }
\begin{spacing}{1.1}
\begin{algorithmic}[1]
\State \textbf{Input:} Training Set $\bX_{n \times p}, \bY_{n \times q}, \lambda_1, \lambda_2$
\State \textbf{Output:} Solution of problem ~(\ref{eq:sub2}--\ref{eq:sub1}): $  \widehat{\bA}, \widehat{\bB}, \widehat{m}$
\State $m \gets r$ \Comment{$r$: {\it upper bound on the number of factors}}
\While {true}
\State $\bA \gets \bA_0 \in \mathbb{R}^{p \times m}$, $i \gets 0$ 
\While {value of $f(\bA, \bB)$ not converged}
\State \hspace*{-0.04 in}$\bB_{i+1}\!\gets\! \text{update $\bB$ with $\bA$ fixed at $\bA_i$}$
\State \hspace*{-0.07 in} $\bA_{i+1}\!\!\gets\!\! \text{update $\bA$ with $\bB$ fixed at $\bB_{i+1}$}$
\State $i \gets i+1$ 
\EndWhile
\State $\widehat{\bA}, \widehat{\bB} \gets $ \ values of $\bA$ and $\bB$ at convergence
\If {$ \text{rank}(\widehat{\bA}) < m \textbf{ or } \text{rank}(\widehat{\bB}) < m$}
\State $m \gets m-1$
\Else
\State \textbf{break}
\EndIf
\EndWhile
\end{algorithmic}
\end{spacing}
\end{algorithm}

\section{Fully Sparse PCA}\label{SPCA}
In~\citep{Zou06s}, Zou, Hastie, and Tibshirani propose a sparse PCA, arguing that in regular PCA
``each principal component is a linear combination of all the original variables, thus it is difficult to interpret the results''. Assume that we have a data matrix $\bX_{n\times p}$ with the following SV decomposition: $\bX = \bU\bD\bV^T$. The principal components are defined as $\bZ = \bU\bD$ with the corresponding columns of $\bV$ as the loadings. In~\citep{zou06}, Zou et al. show that solving the following optimization problem leads to exact PCA:
\begin{equation*}
(\widehat{\bA}, \widehat{\bB})\!=\!\argmin_{\bA, \bB}\!\|\bX\!-\!\bX\bA\bB\|_F^2+\lambda\!\sum_i \|\bA_i\|_2^2  \textrm{ s.t. }  \bB\bB^T\!\!=\!{\bf I},
\end{equation*}
where $\bA_i$ denotes the $i$'th column of $\bA$. Thus, we have $\widehat{\bA}_i \propto \bV_i$, and  $\bX\widehat{\bA}_i$ corresponds to the $i$'th principal component. Sparse PCA (SPCA) is introduced by adding an $\ell_1$ penalty on matrix $\bA$ to the objective function. Zou et al. propose an alternating minimization scheme to solve this problem~\citep{Zou06s}.

The ordinary principal components are uncorrelated and their loadings are orthogonal. SPCA imposes sparsity on the construction of the principal components. Here sparsity means that each component is a combination of only a few of the variables. By enforcing sparsity, the principal components become correlated and the loadings are no longer orthogonal. On the other hand, SPCA assumes, like regular PCA, that the contributions of these components are orthonormal ($\bB\bB^T\!=\!\bI$). In our algorithm, if we replace $\bY$ with $\bX$, i.e., regressing $\bX$ on itself, we get a similar algorithm. However, our work differs in two  ways. First, we also impose sparsity on the contributions of principal components. This comes at the expense of higher computational costs, but results in more interpretable results. Also, the contributions will not be orthonormal anymore. However, by the full rank constraint we impose on the two matrices, we make sure that the principal components and their contributions are linearly independent. Moreover, the algorithm provides a mechanism for learning from the data how many principal components are sufficient to explain the data.

\section{Simulation Study} \label{sec:synth}
In this section, we use synthetic data to compare the performance of our algorithm with several related multivariate regression methods reviewed in the introduction.

\subsection{Simulation Setup}
We generate the synthetic data in accordance with the model described in~(\ref{eq:mv2}), $\bY = \bX \bD + \bE$, where $\bD=\bA\bB$. First, we generate an $n\times p$ predictor matrix, $\bX$, with  rows independently drawn from $\mathcal{N}({\bf 0}, \bsig_X)$, where the $(i,j)$-th element of $\bsig_X$ is defined as $\sigma^X_{i,j} = 0.7^{|j-i|}$. This is a common model for predictors in the  literature~\citep{Yua07, pen10,Roth10}. The rows of the $n\times q$ error matrix are sampled from $\mathcal{N}({\bf 0}, \bsig_N)$, where the $(i,j)$-th element of $\bsig_N$ is defined as $\sigma^X_{i,j} = \sigma_n^2 \cdot 0.4^{|j-i|}$. The value of $\sigma_n^2$ is varied to attain different levels of signal to noise ratio (SNR). Each row of the $p\times m$ matrix $\bA$ is chosen  by first randomly selecting $m_0$ of its elements and sampling them from $\mathcal{N}(0,1)$ and then setting the rest of its elements to zero. Finally, we generate the $m \times q$ matrix $\bB$ by the element-wise product of $\bB = \bU \circ \bW$, where the elements of $\bU$ are drawn independently from $\mathcal{N}(0,1)$ and elements of $\bW$ are drawn from Bernoulli distribution with success probability  $s$.

We evaluate the performance of a given algorithm with three different metrics. We evaluate the predictive performance over a test set $(\bX_{test}, \bY_{test})$, separate from the training set, in terms of the mean-squared error:
\begin{equation}\label{eq:MSE}
\textrm{MSE} = \frac{\| \bX_{test}\widehat{\bD} - \bY_{test} \|_F^2}{nq},
\end{equation}
where $\widehat{\bD}$ is the estimated coefficient matrix. In our case, we have $\widehat{\bD} = \widehat{\bA}\widehat{\bB}$. We also compare different algorithms based on their signed sensitivity and specificity of the support recognition:
\vspace*{-0.03 in}
\begin{align*}
\textrm{Signed  Sensitivity}  & = \frac{\sum_{i,j}{\bf 1[} d_{i,j} \cdot \widehat{d}_{i,j} > 0{\bf ]}}{\sum_{i,j} {\bf 1}[d_{i,j}\neq 0]}, \\
\textrm{Specificity}  & = \frac{\sum_{i,j}{\bf 1[} d_{i,j} = 0{\bf ]}\cdot {\bf 1[} \widehat{d}_{i,j} = 0{\bf ]}}{\sum_{i,j} {\bf 1}[d_{i,j}= 0]},
\end{align*}
where ${\bf 1}$ represents the indicator function.


We compare the performance of our algorithm, SMFR, with many other algorithms reviewed in Section~\ref{sec:rel} as well as a baseline algorithm with a simple ridge penalty.  We consider three different regimes: (i) high-dimensional problems with few instances (50) compared to the number of predictors or responses (50, 100, or 150); (ii) problems with increased number of instances ($p,q<n$); and (iii) problems where the structural assumption of our technique is violated. In the first regime, which is of most interest to us due to high-dimensionality, we explore different parameter settings. The values of $\sigma_n$ and $s$ affect the SNR---lower values of $\sigma_n$ and higher values of $s$ correspond to higher values of SNR (e.g., $\sigma_n=5, s=0.1$ corresponds to a very low SNR).
In regime (iii), we violate the assumption about the structure of the coefficient matrix, i.e., $\bD=\bA\bB$, in two ways. In the first case, $\bD$ has an element-wise sparsity with a density of $20\%$; in the second, it has row-wise sparsity where $60\%$ of the rows are all zeros and the rest have $30\%$ non-zero elements. We consider these cases to compare our algorithm with others in an 
unfavourable setting.

\begin{table}[ht!]
\centering
\tabcolsep=0.1cm
\hspace*{-0.3 in}
\begin{tabular}{c c c c c c  c || c c c c c c c c}
\multicolumn{7}{c}{\underline{Parameters}} & \multicolumn{8}{c}{\underline{MSE over test set}} \\ 
 
$n$ & $p$ & $q$ & $m$ & $m_0$ & $\sigma_n$ & $s$ & SMFR & LASSO & $\ell_1/\ell_2$~\citep{Obo11} & SRRR~\citep{Chen12} & RemMap~\citep{pen10} & SPLS~\citep{Chun10} & Trace~\citep{Ji09} & Ridge\\ 
\hline \hline
50 & 150 & 50 & 10 & 1 & 3 & 0.2 & \spcell{0.070 \\ (0.004)} &  \spcell{0.083 \\ (0.005)} & \spcell{0.090 \\ (0.005)} & \spcell{0.084 \\ (0.005)} & \spcell{0.083 \\ (0.005)} & \spcell{0.091 \\ (0.007)} & \spcell{0.088 \\ (0.004)}& \spcell{0.089 \\ (0.004)} \\ 
\hdashline
& & & 10 &1 & 3   & 0.4 &  \spcell{0.078 \\ (0.007)} &  \spcell{0.104 \\ (0.008)} & \spcell{0.105 \\ (0.007)} & \spcell{0.099 \\ (0.007)} & \spcell{0.104 \\ (0.008)} & \spcell{0.110 \\ (0.008)} & \spcell{0.110 \\ (0.007)}& \spcell{0.111 \\ (0.006)}\\ 
\hdashline
& & &10 & 1& 5   & 0.2 &  \spcell{0.110 \\ (0.004)} &  \spcell{0.118 \\ (0.005)} &  \spcell{0.133 \\ (0.004)} & \spcell{0.117 \\ (0.005)} & \spcell{0.123 \\ (0.004)} & \spcell{0.122 \\ (0.007)} & \spcell{0.115 \\ (0.005)}& \spcell{0.122 \\ (0.006)}\\ 
\hdashline
& & &15 & 2& 3   & 0.2  & \spcell{0.071 \\ (0.003)}& \spcell{0.108 \\ (0.006)}&\spcell{0.112 \\ (0.007)}&\spcell{0.109 \\ (0.008)}&\spcell{0.107 \\ (0.006)}& \spcell{0.114 \\ (0.008)}& \spcell{0.109 \\ (0.006)}& \spcell{0.110 \\ (0.008)}\\
\hdashline
50 & 100 & 100 & 10 & 1 & 5 & 0.1 &\spcell{0.068 \\ (0.001)}&\spcell{0.070 \\ (0.002)}&\spcell{0.092 \\ (0.002)}&\spcell{0.071 \\ (0.002)}&\spcell{0.075 \\ (0.002)}& \spcell{0.073 \\ (0.002)}& \spcell{0.071 \\ (0.002)}& \spcell{0.074 \\ (0.002)}\\
\hline
500 & 150 & 50 & 10 & 1 & 3 & 0.2 &\spcell{0.0172 \\ (0.0001)}&\spcell{0.0180 \\ (0.0001)}&\spcell{0.0198 \\ (0.0001)}&\spcell{0.0176 \\ (0.0001)}&\spcell{0.0184 \\ (0.0001)}&\spcell{0.0216 \\ (0.0007)} & \spcell{0.0183 \\ (0.0002)}& \spcell{0.0187 \\ (0.0001)}\\
\hdashline
500 & 100 & 100 & 10 & 1 & 5 & 0.3 &\spcell{0.0202 \\ (0.0001)}&\spcell{0.0209 \\ (0.0002)}&\spcell{0.0222 \\ (0.0001)}&\spcell{0.0204 \\ (0.0001)}&\spcell{0.0214 \\ (0.0001)}& \spcell{0.0222 \\ (0.0003)} & \spcell{0.0208 \\ (0.0001)}& \spcell{0.0213 \\ (0.0002)}\\
\hline
50 & 100 & 100 & \multicolumn{2}{c}{\spcell{\it element-wise \\ \it sparsity}} & 5 & ---   & \spcell{0.079 \\ (0.001)} & \spcell{0.078 \\(0.001)}&\spcell{0.096 \\(0.002)}&\spcell{0.080 \\ (0.001)}&\spcell{0.085 \\ (0.001)}& \spcell{0.081 \\ (0.001)}& \spcell{0.078 \\ (0.002)}& \spcell{0.081 \\ (0.001)}\\
\hdashline
50 & 150 & 50 & \multicolumn{2}{c}{\spcell{\it row-wise \\ \it sparsity}} & 3 & ---   & \spcell{0.082 \\ (0.004)} & \spcell{0.076 \\(0.003)}& \spcell{0.080 \\(0.003)}&\spcell{0.079 \\(0.003)}&\spcell{0.075 \\(0.003)}& \spcell{0.083 \\ (0.003)}& \spcell{0.081 \\ (0.001)}& \spcell{0.102 \\ (0.004)}\\
\hline
\end{tabular} 
\vspace*{0.1 in}
\caption{Comparison of six algorithms for different setups. We report mean and standard deviations of the MSE over the test sets, based on 20 simulation runs.\label{tab:comp}}
\end{table}



\subsection{Results}
\paragraph{Predictive performance} The means and standard deviations of MSE for different algorithms are presented in Table~\ref{tab:comp}. We use five-fold cross-validation to find the tuning parameters of all algorithms.  We set $r$, the maximum  number of factors, to 20. For the first two simulation regimes, our algorithm outperforms the other algorithms and results in lower MSE means and standard deviation. The improvements are more significant in the high-dimensional settings with high SNR. However, in settings with low SNR ($\sigma_n=5, s=0.1$) or high number of instances ($n=500$), we still observe lower errors for SMFR. On average, our algorithm reduces the test error by $13.2\%$ compared with LASSO, $21.4\%$ compared with $\ell_1/\ell_2$, $12.3\%$ compared with SRRR, $15.2\%$ compared with RemMap,  $19.4\%$ compared with SPLS, $39.1\%$ compared with Trace, and $16.7\%$ compared with Ridge. 

In the last two simulations, where the assumed factor structure is abandoned, our algorithm has no advantage over simpler methods with no factor structure (such as LASSO) and gives a higher error.

\paragraph{Variable selection} In Figure~\ref{fig:sesp}, we compare the average signed sensitivity and specificity of different algorithms (based on 20 simulation runs) as the number of instances increase (other parameters kept fixed). We observe that our algorithm has higher sensitivity and specificity. This effect for specificity is reduced as the number of instances increases.  This shows that our algorithm is more advantageous in  high-dimensional settings where the number of instances is comparable to or less than the number of predictors/responses. Although we only show the plots for a specific parameter setting, the results are similar for other parameters. 

\begin{figure}[!ht]
\begin{center}
\subfigure{%
\includegraphics[scale=0.17]{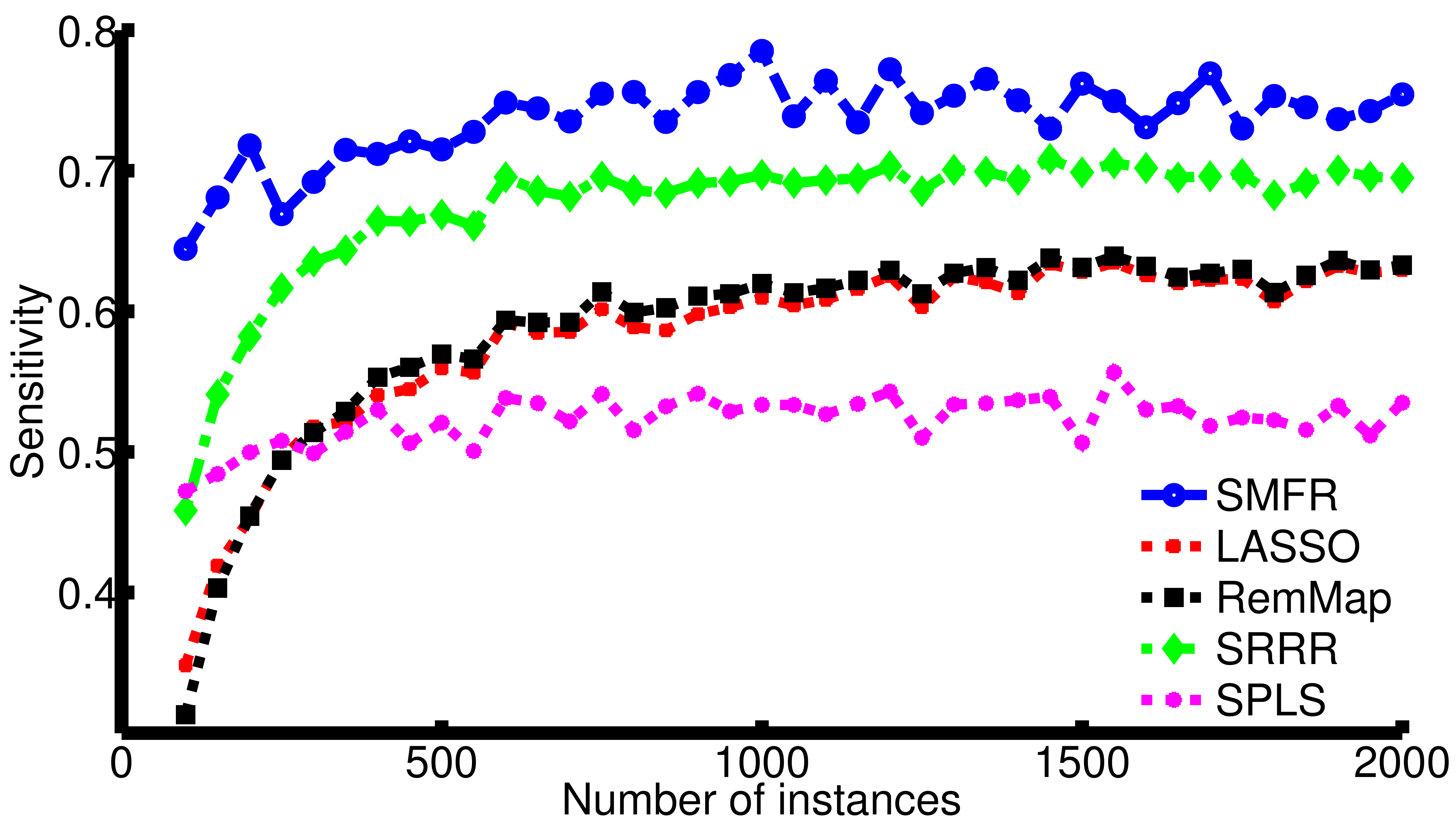}
}
\subfigure{%
\includegraphics[scale=0.17]{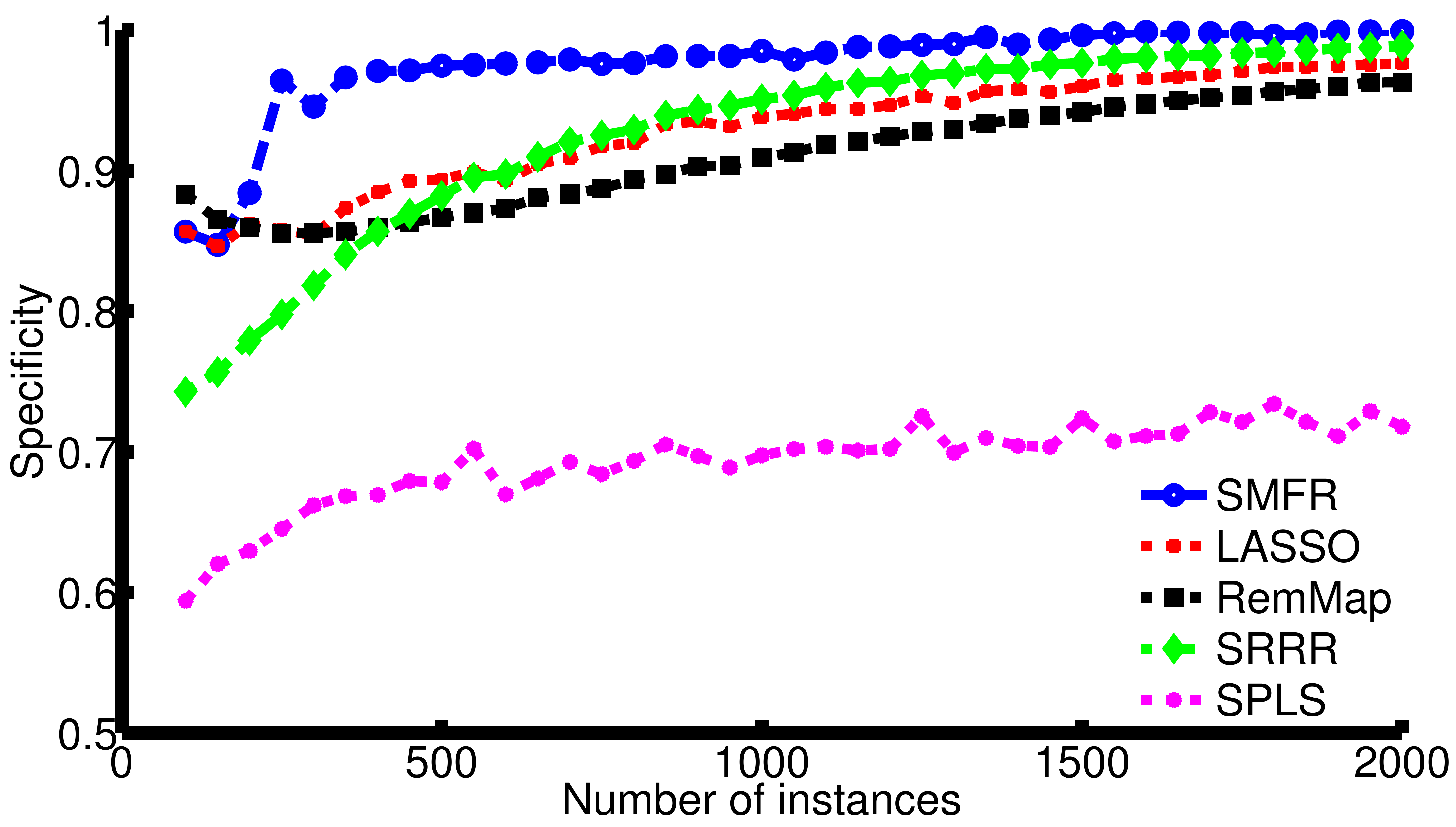}
}\caption{Sensitivity and specificity comparison of different algorithms as the number of instances increases. ($p=150, q=50,m=10,m_0=1,\sigma_n=3,s=0.3$)\label{fig:sesp}}
\end{center}
\end{figure}

\paragraph{Number of latent factors} In Table~\ref{tab:nfac}, we compare the number of estimated factors for the three algorithms that perform dimensionality reduction. For SRRR and SPLS, we find the number of factors by 5-fold cross-validation. 
The true number of factors is mentioned in the fourth column. We observe that our algorithm provides better estimates of the number of factors. 

\begin{table}[!ht]
\begin{center}
\bgroup
\begin{tabular}{@{\hspace{-.1em}}c @{\hspace{.3em}}c @{\hspace{.3em}}c @{\hspace{.3em}}c @{\hspace{.3em}}c @{\hspace{.3em}}c @{\hspace{.3em}}c @{\hspace{.3em}}| @{\hspace{.1em}}c| @{\hspace{.1em}}c|@{\hspace{.1em}} c}
$n$ &$p$ &$q$& $m$&$m_0$&$\sigma_n$&$s$ & ~SMFR & \ SRRR & \ SPLS \\
\hline
50 & 150&50&10& 1& 3& 0.2 & \ 10, 10.3, 1.1 & \ 8, 8.5, 1.4 & \ 14, 13.7, 3.1 \\
50 & 150&50&10& 1& 3& 0.4 & \ 10, 10.4, 1.3 & \ 10, 9.4, 1.2 & \ 18, 16.6, 3.1  \\
50& 150&50& 10& 1& 5& 0.2 & \ 11, 11.1, 1.8 & \ 7, 6.7, 2.4 & \ 7, 8.2, 2.9\\
50 & 150&50&15& 2& 3& 0.2 & \ 14, 13.5, 1.1 & \ 13, 12.6, 2.3 & \ 19, 19.1, 1.1 \\
50& 100& 100& 10& 1& 5& 0.1 & \ 7, 7.3, 2.8 & \ 6, 6.1, 3.0 & \ 4, 4.4, 1.3\\
500& 150& 50& 10& 1& 3& 0.2 & \ 10, 10, 0.0 & \ 10, 9.9, 0.2 & \ 15, 15, 0.0\\
500& 100& 100& 10& 1& 5& 0.3 & \ 10, 10.1, 0.9 & \ 10, 9.9, 0.2 & \ 15, 15, 0.0\\
\hline
\end{tabular}
\caption{Median, mean, and standard deviation of estimated number of factors (based on 20~runs).\label{tab:nfac}}
\egroup
\end{center}
\end{table}

\paragraph{Computation time} We also compare the computation time of different algorithms for the parameter settings corresponding to the first row of Table~\ref{tab:comp}. We report the median computation time (on a PC with 16GB RAM and quad-core CPU at 3.4GHz) of each algorithm (excluding the cross-validation part) over 20 runs; SMFR (with prox-linear updates):~1.6s, SRRR:~5.6s, RemMap:~0.3s, SPLS:~0.3s, LASSO:~0.02s, and $\ell_1/\ell_2$:~0.33s. Since the algorithms are implemented using different programming languages and stopping criteria vary slightly, care must be taken when interpreting these results. The main message is that SMFR and SRRR have  larger computation times, since they
solve more complicated problems which involve estimating the number of factors, and the loading and factoring matrices. This  extra  information has value in itself and provides more insight about the structure of data. Also, SMFR is almost three times faster than SRRR.
 
 \paragraph{SMFR initialization} In all the experiments above, and the ones in the next section, we use a random initialization for our algorithm  where the elements of $\bA_0$ are sampled independently from $\mathcal{N}(0,1)$. We compare this initialization with two other methods which are based on the matrix factorization of other solutions: (i) using the SVD of LASSO solution, $\bD_{\text{LASSO}} = \bU\bS\bV^T$, setting $\bA_0 = \bU_{1:m} \bS_{1:m}$ and $\bB_0 = (\bV^T)_{1:m}$, where the subscript $1\!:\!m$ indicates choosing the first $m$ columns; and (ii) using the SVD of trace-norm solution, $\bD_{\text{Trace}} = \bU\bS\bV^T$, setting $\bA_0 = \bU_{1:m} \bS_{1:m}$ and $\bB_0 = (\bV^T)_{1:m}$. We compare these three initializations for the case where the data is generated according to the first regime of Table 1 and vary the level of noise and sparsity.  The results are shown in Table~\ref{tab:init}. The procedure to find the number of effective factors is the same for all three initializations.  Also, as a baseline comparison, we show the results for the usual LASSO regression in the last row. The results reported in Table~\ref{tab:init} are based on 20 runs (i.e., 20 different realizations of the data).  It would also be interesting to examine how the results change based on different random initializations for a fixed realization of the data.  In Table~\ref{tab:randinit}, we show the results for 5 realizations of the data. For each realization, we run our algorithm with 20 different random initializations and report the mean and standard deviation of MSE over the test set. For comparison, we also include the MSE achieved by the decomposition of LASSO and trace norm solutions.  
 
 The results show that our algorithm is very robust to the choice of the starting point. The results in both tables show that random initialization gives similar (or slightly better) results compared to more sophisticated initializations. Moreover, the very small standard deviations in the first row of Table~\ref{tab:randinit} and the similarity of the error for all three types of initializations show that for a given realization of the data, different initializations lead to very similar estimates by our algorithm -- another indication of the robustness of our algorithm to the choice of the starting point.

 \begin{table}[!h]
\begin{center}
\begin{tabular}{| c | c | c | c | c |}
\hline
  & $\sigma_n = 3, s = 0.2$ & $\sigma_n = 3, s = 0.3$ &  $\sigma_n = 5, s = 0.2$ & $\sigma_n = 5, s = 0.3$ \\ \hline
 Random Initialization& 0.070 (0.003) & 0.075 (0.003) & 0.110 (0.005) & 0.116 (0.005) \\  \hline
 LASSO Initialization&  0.071 (0.003) & 0.076 (0.004) & 0.112 (0.004) & 0.117 (0.004) \\  \hline 
 Trace Norm Initialization&  0.071 (0.004)& 0.076 (0.004) & 0.113 (0.004) & 0.119 (0.006)\\ \hline 
 LASSO & 0.085 (0.004) & 0.097 (0.005) & 0.119 (0.005) & 0.135 (0.007)\\ \hline
\end{tabular}
\vspace*{0.1 in}
\caption{Comparing different initialization techniques. ($p=150, q=50, n=50,$ and $m=10$)\label{tab:init}}
\end{center}
\end{table}

 \begin{table}[!h]
\centering
\label{my-label}
\begin{tabular}{|c||c|c|c|c|c|c|}
\hline
         Initialization &    Instance \#1 & Instance \#2 & Instance \#3 & Instance \#4 & Instance \#5\\ \hline \hline
{Random} &  0.071 (0.0004) & 0.069 (0.001) & 0.065 (0.0004) & 0.070 (0.0005) & 0.071 (0.0005)\\ \hline 
{LASSO} & 0.071 & 0.070 & 0.065& 0.070 & 0.072\\ \hline 
{Trace Norm} &  0.071 & 0.070 & 0.065 & 0.071 & 0.071\\ \hline
\end{tabular}
\vspace*{0.1 in}
\caption{MSE variation over 20 different random initializations for the same problem (first row of Table 1). \label{tab:randinit}}
\end{table}

\section{Application to Real Data}\label{sec:real}
We apply the proposed algorithm to real-world datasets and show that it exhibits better or similar predictive performance compared to state-of-the-art algorithms. We show that the factoring identified by our algorithm provides valuable insight into the underlying structure of the datasets.

\subsection{Montreal's bicycle sharing system (BIXI)}
The first dataset we consider provides information about Montreal's bicycle sharing system called BIXI. The data contains the number of available bikes in each of the 400 installed stations for every minute. We use the data collected for the first four weeks of June 2012. From this dataset we form the set of predictors and responses as follows. We allocate two features to each station corresponding to the number of arrivals and departures of bikes to or from that station for every hour. The learning task is to predict the number of arrivals and departures for all the stations from the number of arrivals and departures in the last hour (i.e., a vector autoregressive model). The choice of this model is a compromise between accuracy and complexity. Mathematically, we want to estimate $\bD$ such that $\bY \simeq \bX\bD$, where  $\bX_{t,j}$ and $\bX_{t,400+j}$ respectively show the number of arrivals and departures in hour $t$ at station $j$ and $\bY_{t,j}$ and $\bY_{t,400+j}$  respectively show the number of arrivals and departures in hour $t+1$ at station $j$.

We perform the prediction task on each of the four weeks. For each week, we take the data for the first 5 days (120 data points; Friday to Tuesday) as the training set (with the fifth day data as the validation set), and the last two days as the test set (48 data points; Wednesday and Thursday). We compare the algorithms performing dimensionality reduction in terms of their predictive performance on the test sets and the number of chosen factors in Table~\ref{tab:bixi}. We also include LASSO and $\ell_1/\ell_2$ as baseline algorithms. To avoid showing very small numbers, we present the value of MSE, defined in~(\ref{eq:MSE}), times $nq$.  In terms of the prediction performance, we observe that our algorithm outperforms the others in all 4 weeks. 
We can also compare the algorithms in terms of the number of chosen factors. For our algorithm, we set $r=15$ (the upper bound on the number of factors), and for the others, we do the cross-validation for 1 to 15 factors (15 different values). SMFR always uses all the 15 factors while the other two algorithms choose fewer factors (using cross-validation).\\ 

\begin{table}[h!]
\centering
\tabcolsep=0.11cm
\begin{tabular}{ c | c | c c c c c}
 week & & SMFR & SRRR  & SPLS & LASSO & $\ell_1/\ell_2$ \\
 \hline \hline
 \multirow{2}{*}{1} & error & 557.3 & 570.0  & 1661 & 580.4 &591.0\\
 \cline{2-7}
 &  \small factors & 15& 3& 7&---&--- \\
 \hline 
 \multirow{2}{*}{2} & error& 570.1 & 602.2  & 1888 & 610.9 &623.7\\
  \cline{2-7}
 &  factors & 15& 3& 6&---&--- \\
 \hline
 \multirow{2}{*}{3} & error & 618.8 & 641.9  & 2159 & 643.4 &657.8\\
  \cline{2-7}
 &  factors & 15& 7& 5&---&--- \\
 \hline
 \multirow{2}{*}{4} & error & 549.7 & 594.6  & 1621 & 594.9 &588.0\\
  \cline{2-7} 
 &  factors & 15& 3& 6& ---&--- \\
 \hline
\end{tabular}
\vspace*{0.1 in}
\caption{Total squared error (MSE$\times nq$) and the number of factors for BIXI dataset \label{tab:bixi}}
\end{table}


To gain more insight into the quality of predictions, we randomly choose two features in week 4 and plot the predictions made by different algorithms over the test set in Figure~\ref{fig:bixi_err}; the results for other features are similar. The y-axis represents the cumulative number of bikes. We observe that SMFR provides a better fit to the data.

\begin{figure}[!ht]
\begin{center}
\subfigure{%
\includegraphics[scale=0.15]{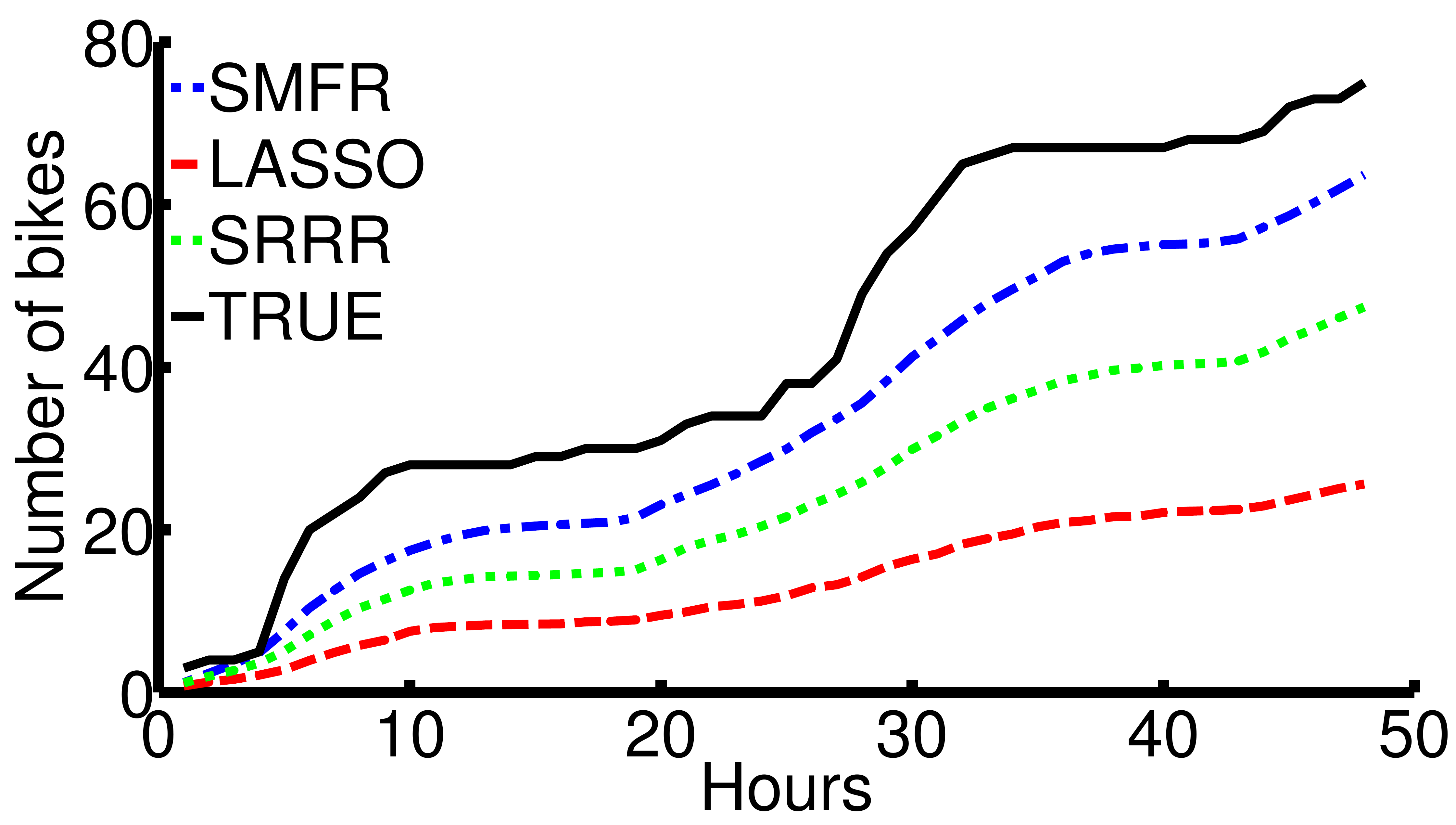}
}
\subfigure{%
\includegraphics[scale=0.15]{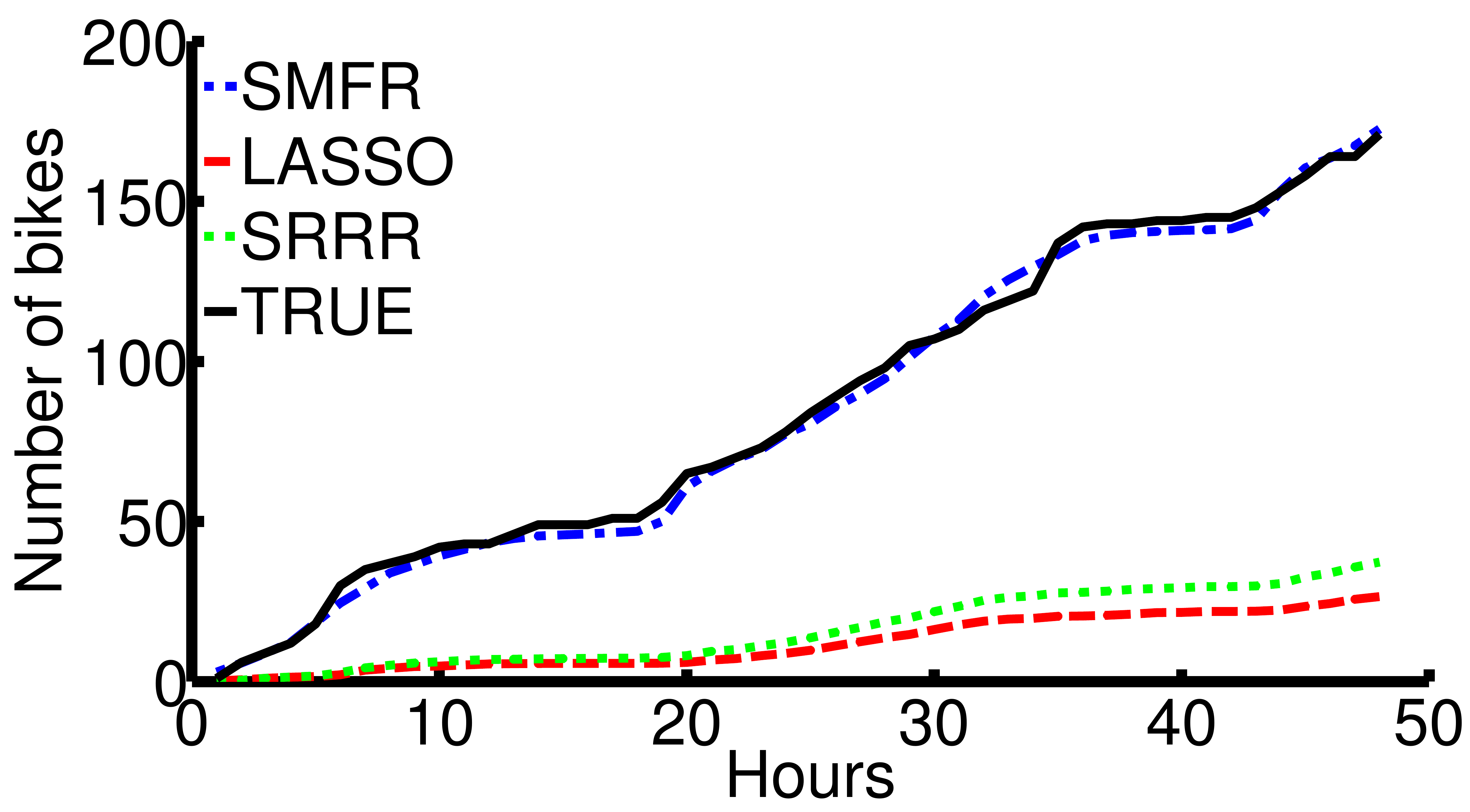}
}\caption{Comparing fit of different algorithms for two sample cumulative features. Proposed algorithm performs better than others for most stations. 
\label{fig:bixi_err}}
\end{center}
\end{figure}

\begin{figure}[!hb]
\begin{center}
\subfigure[Factor 1: populated residential areas to downtown]{%
\label{fig:bixifirst}
\includegraphics[scale=0.22]{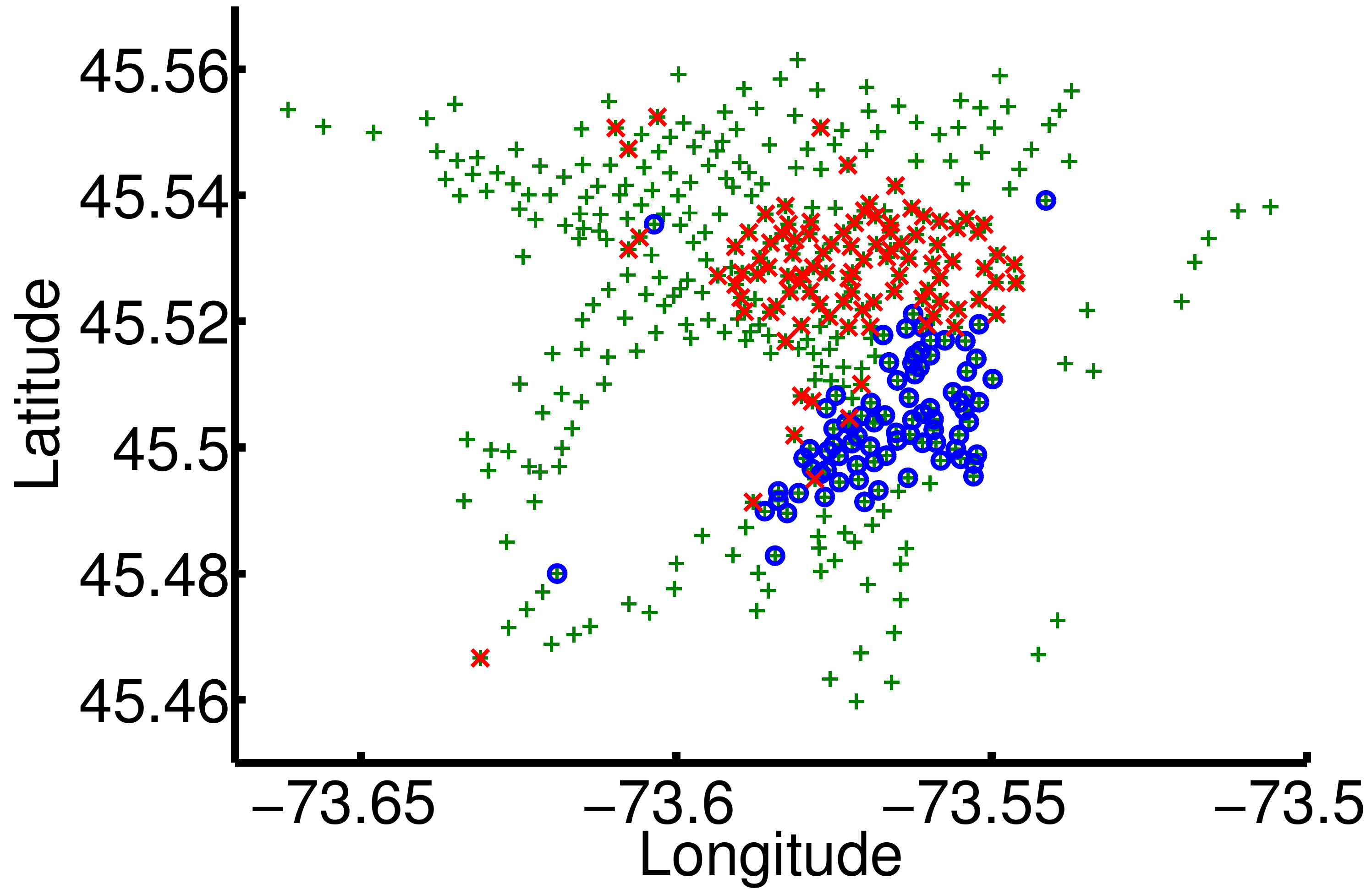}
}
\subfigure[Factor 2: peripheral parts of downtown to central, popular parts]{%
\label{fig:bixisecond}
\includegraphics[scale=0.22]{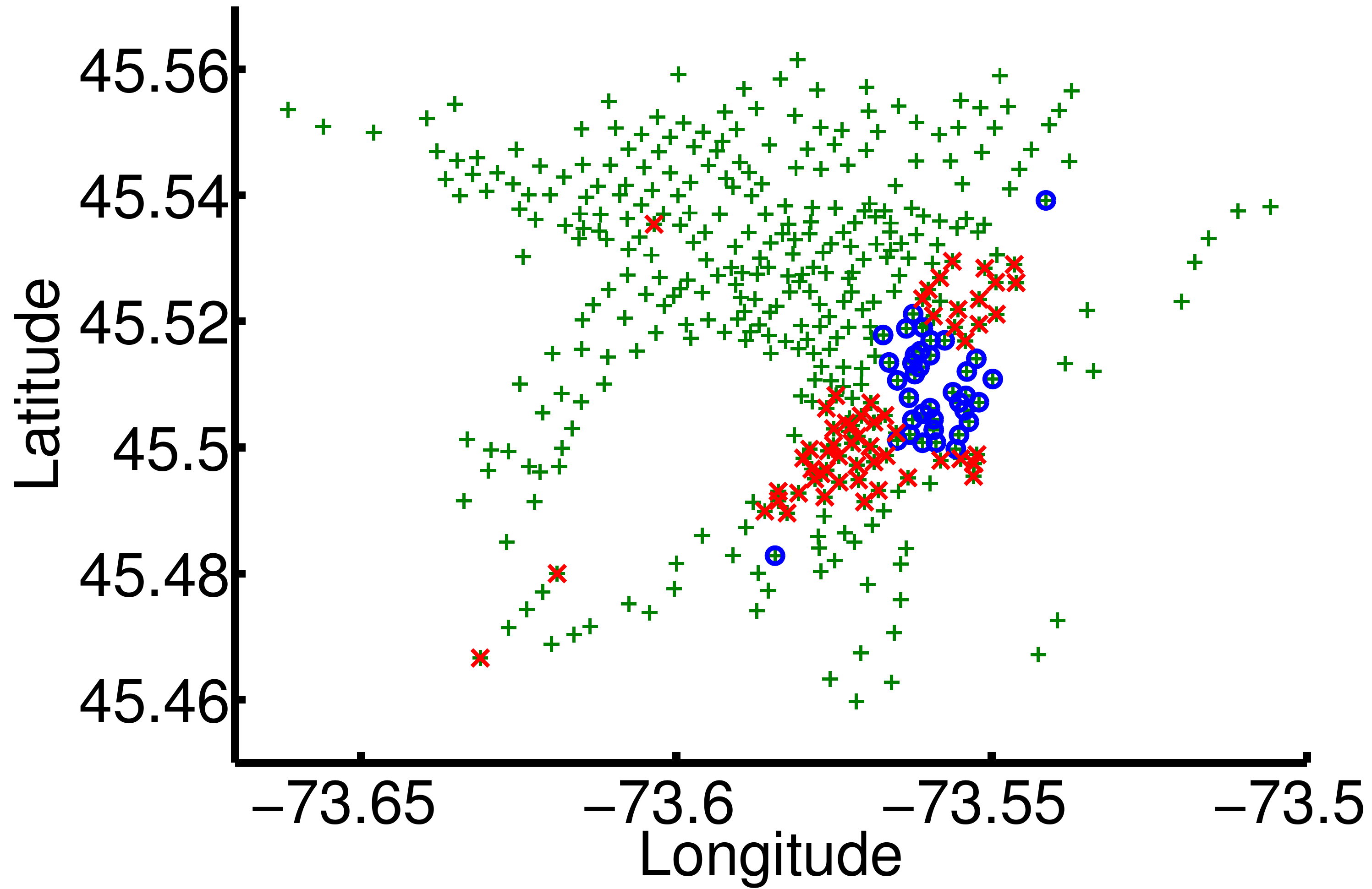}
}
\subfigure[Factor 3: flow inside the central downtown]{%
\label{fig:bixithird}
\includegraphics[scale=0.22]{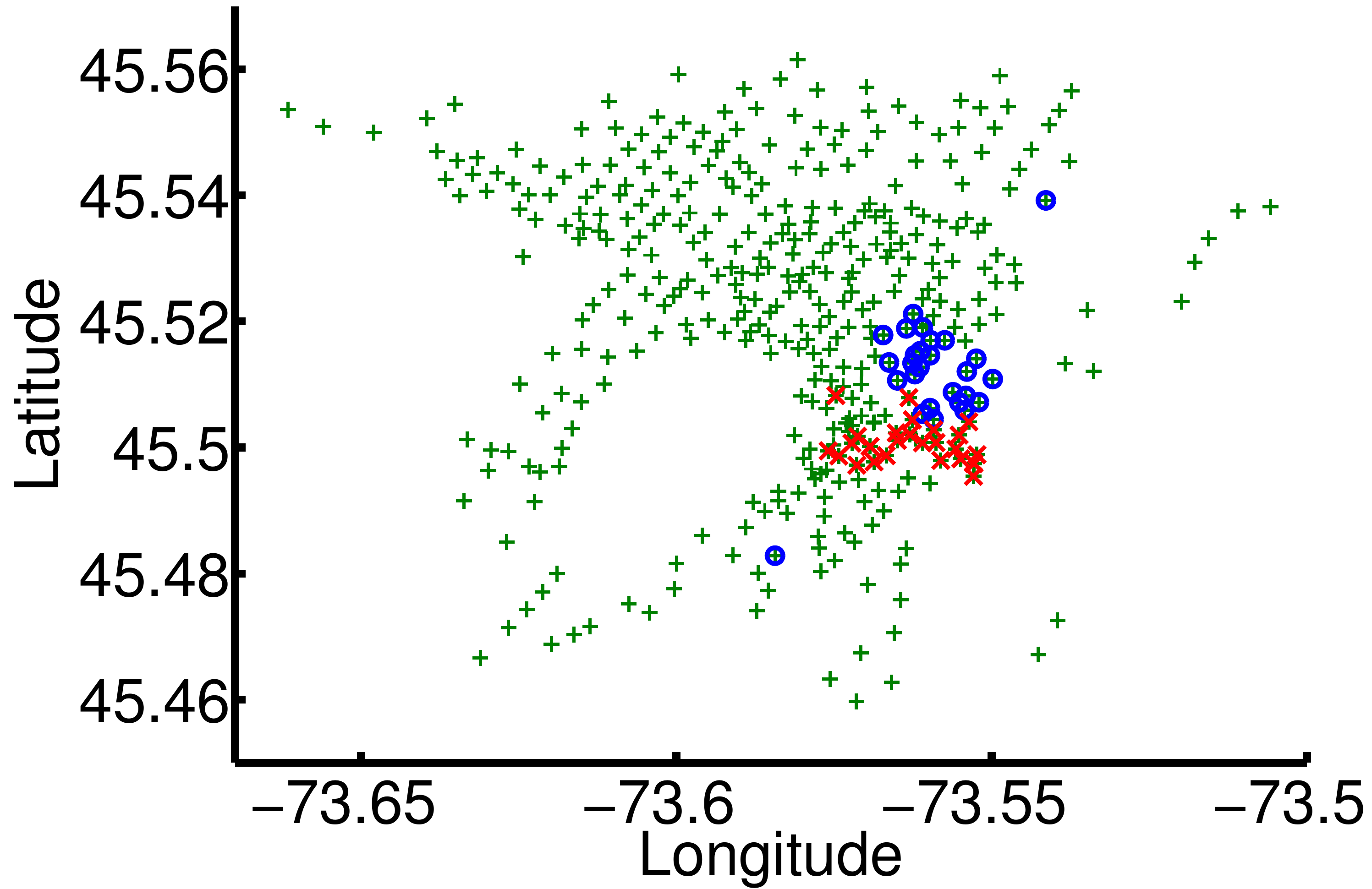}
}\caption{Three of the factors identified in the BIXI dataset by our algorithm. Green plus signs show the stations, red crosses show the departure features and blue circles show the arrival features of a station chosen in the factor. 
\label{fig:bixi}}
\end{center}
\end{figure}

We are using the data from the weekends as well as three weekdays to learn the prediction model, so it is reasonable to ask whether it is sensible to combine weekends and weekdays. Our preliminary data analysis indicated that the data on weekends are not particularly different from the other days, and thus we can include weekends in our training sets. For example, the correlation between the number of arrivals on weekends and the number of arrivals on weekdays across stations, is very high (around $0.9$) for all four weeks, showing that the relative level of activity in a station (compared to other stations) is similar for weekends and weekdays; i.e, if a station has relatively high number of arrivals on the weekdays, it is highly likely that it will have a relatively high number of arrivals on the weekends too. We have repeated the experiment after removing the weekends and obtained similar results.

To further investigate the variable selection of our algorithm, we run it on the whole data and examine the resulting factors.  Three of these are shown in Figure~\ref{fig:bixi}. In these figures, all the bike stations are shown with green plus signs. For each factor, we show its constituent features; red crosses and blue circles correspond respectively to the departure and arrival features of each station that are present in that factor. Examining these factors provide useful insight into the data. For instance, the factor in Figure~\ref{fig:bixifirst} shows that the departures from populated residential areas (The Plateau, Mile End, Outremont) and arrivals at downtown (Ville Marie) are combined together to form a factor. This agrees with the intuition that many people are taking  bikes to go from their homes to downtown where universities and businesses are located. The factor in Figure~\ref{fig:bixisecond} shows another strong effect which corresponds to the flow from the peripheries of downtown to more central locations (Place des Arts, Old Port). Many hotels and several universities are situated at the edge of downtown; numerous restaurants,  cafes and tourist sites are located in the centre, and several festivals  occurred there during June. The third factor represents the flow within this central part.

Our model is expected to provide a better fit to the BIXI data since its assumed structure matches the underlying structure of bike movements. It is expected  that generally, people ride from certain areas of the city to others. The factors capture this aggregated movement (similar to wavelets in the sense of smoothing over multiple individual sites). We expect the factors (matrix $\bA$) to be sparse because they capture movement from one region to another, and any stations outside these regions do not participate. We expect the loadings (matrix $\bB$) to be sparse because each station should only be predicted by those factors that involve it in terms of arrivals or departures. Therefore, as also confirmed by predictive performance, our proposed sparse, low-dimensional structure is a good fit to the data.

\subsection{S\&P 500 stocks}
We consider 294 companies from the S\&P 500~\citep{SP500} and collect their daily returns (percentage change in value from one day to the next) between March 1992 and December 2013 for a total of 5500 days. These returns are volatility adjusted using a GARCH model~\citep{fra11} and market adjusted by subtracting the market's average return for each day. We use the global industry classification standard~\citep{GICS} which categorizes all major public companies into 10 sectors.  Since there are very few companies in the Telecom sector, we ignore this sector altogether. We divide the companies into two equal groups such that the number of companies from a specific sector is the same in each of the two groups. Our learning task is to predict the daily returns of the second group of companies from the first group. Although this is not a prediction task that would be of most interest in practice (where we want to predict {\it future} returns), it is a good test to examine the ability of an algorithm to extract the underlying factors and existing structure in the data. 

We divide the 5500 day period into 10 intervals of 550 days. For each interval, we choose the first 400 days as the training set (with the last 100 days as validation set)  and the last 150 days as the test set. The average and standard deviation of MSE for SMFR is 128.8 (16.0), for SRRR is 129.2 (15.8), and for LASSO is 129.4 (17.1). There is minimal difference in the predictive performance of these algorithms on this dataset; however, we gain significant insight into the nature of data of by looking at the factors created by our algorithm.  Figure~\ref{fig:sp500} compares the resulting factors of SRRR and SMFR run over a period of 3000 days (this length is chosen to have clearer factors). We place companies from the same sectors next to each other in predictor and response matrices, and separate them with  green lines. The sectors, from top to bottom, are Energy, Materials, Industrials, Consumer Discretionary, Consumer Staples, Health Care, Financials, IT, and Utilities. The proposed algorithm, SMFR, captures  the sector factors to a good extent, whereas the structure of factors  identified by SRRR is less clear.

\begin{figure}[!ht]
\begin{center}
\subfigure[SMFR factors]{%
\includegraphics[scale=0.2]{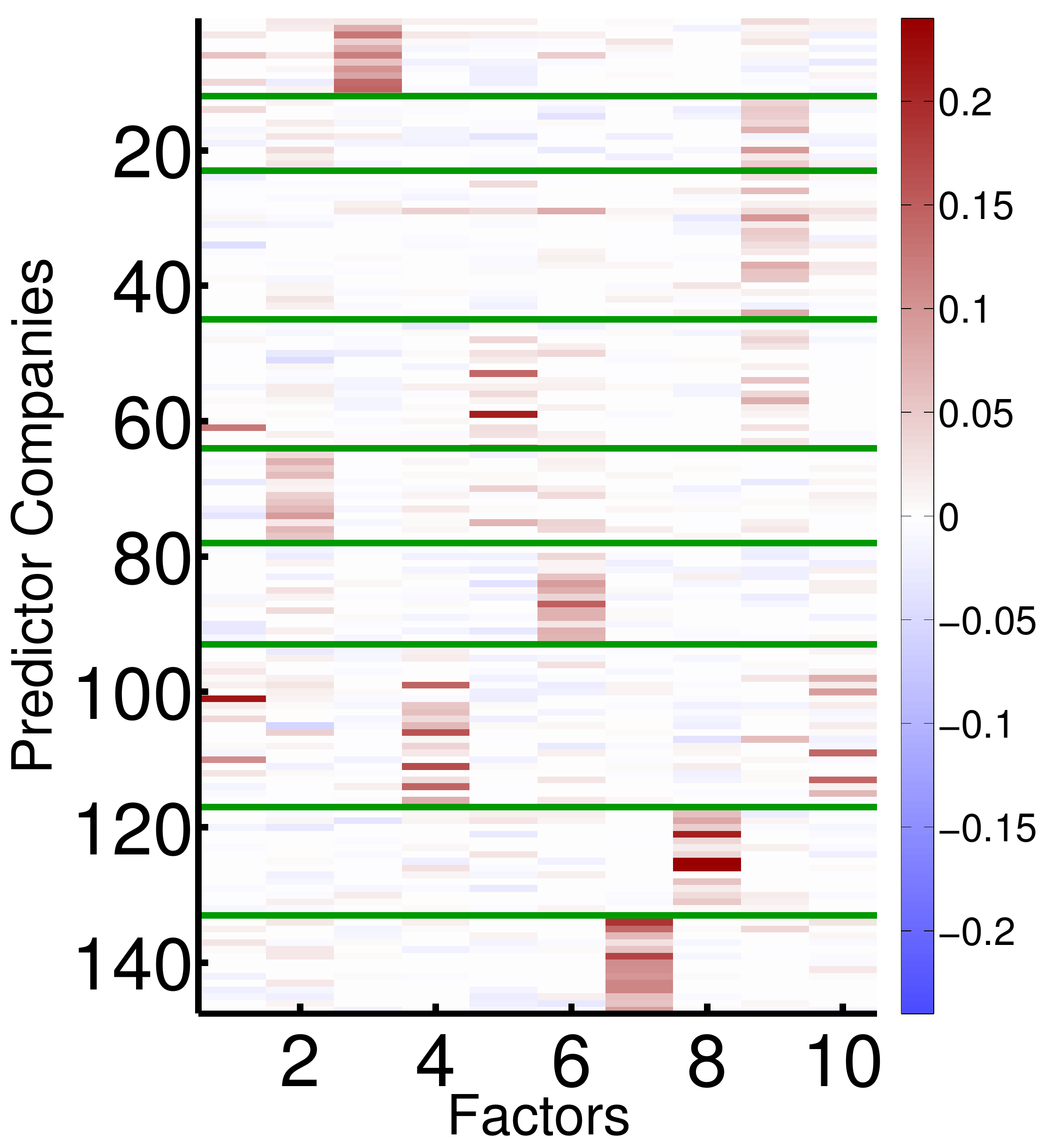}
}
\subfigure[SRRR factors]{%
\includegraphics[scale=0.2]{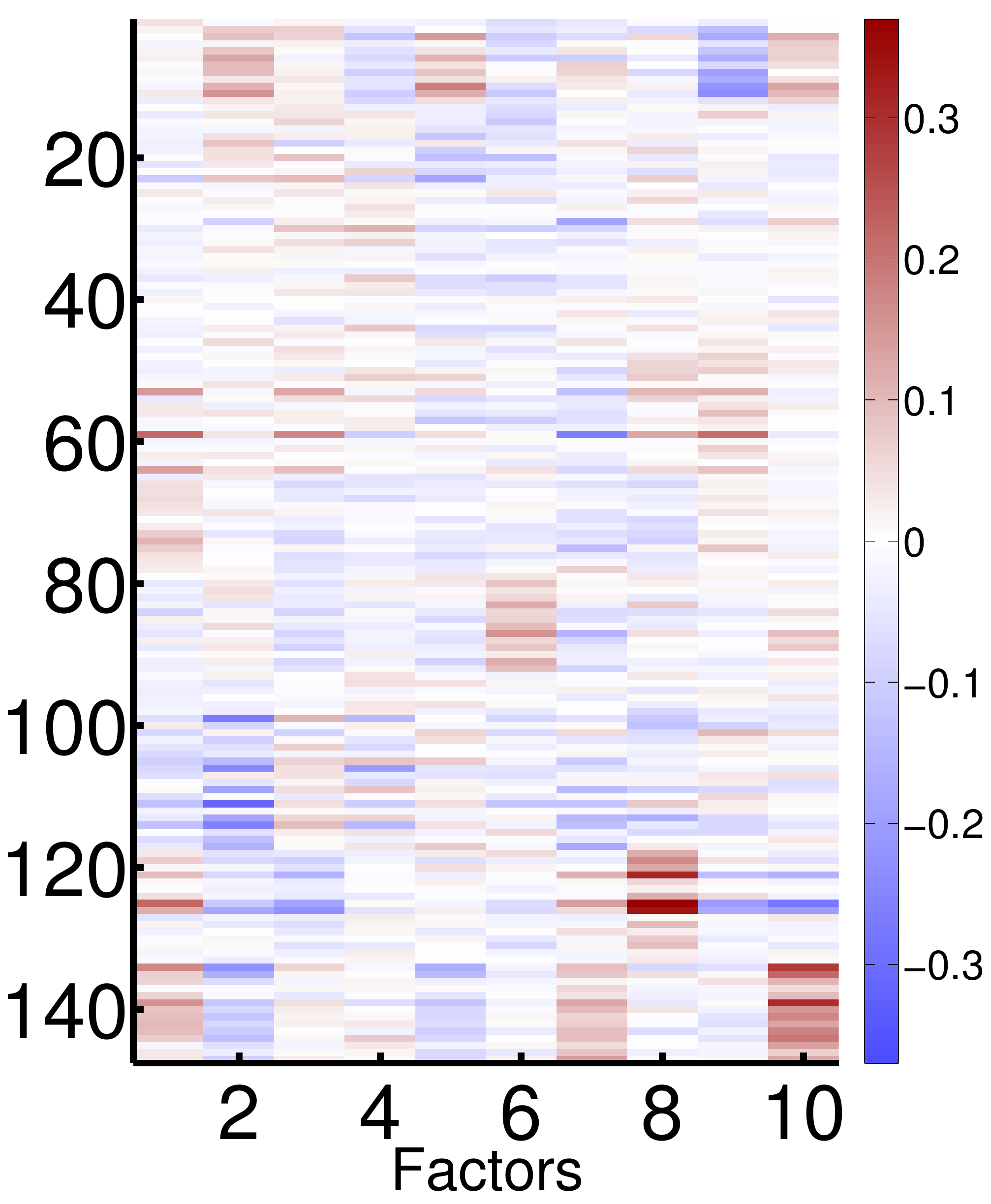}
}\caption{Factor matrices for SMFR and SRRR. \label{fig:sp500}}
\end{center}
\end{figure}

\subsection{Sparse PCA}
We compare our proposed fully sparse PCA with the well-known SPCA of Zou et al. introduced in~\citep{zou06}. We use our BIXI dataset again. Remember that there are 800 features in this dataset.  We use the first 200 data points of the dataset to simulate a high-dimensional setting. Thus, our data matrix, $\bX$, is $200 \times 800$. We compute the first 6 principal components and compare them using two metrics. First, we compute the {\em adjusted} explained variance; since the sparse principal components are not uncorrelated as in regular PCA, computing their explained variance separately is not correct. Before computing the explained variance of the $k$'th principal component, regression projection is used to remove its linear dependence to components $1$ to $k-1$. See~\citep{Zou06s} for more details on how to compute the  adjusted explained variance.

We also compare the loading sparsity. To have sparse components, we set the regularization parameters such that each component receives contributions from at most $10\%$ of the features. As a benchmark, we also consider the simple thresholding where the values of the regular principal components with absolute value smaller than a threshold are set to zero (here, we keep the top 80).

The results are summarized in Table~\ref{tab:pca}. Compared to SPCA our algorithm explains more variance in the data (a total of $36.5\%$ over the first 6 components compared to $18.6\%$) with sparser components. Also, we achieve a higher adjusted total variance compared to the simple thresholding ($36.5\%$ compared to $28.3\%$).

\newcolumntype{C}{>{\centering\arraybackslash}p{3em}}
\begin{table}[h!]
\centering
\setlength\tabcolsep{3.5pt}
\begin{tabular}{|c|c|c|c|C|C|}
\hline 
 & \multicolumn{3}{c|}{Adjusted Var (\%)} & \multicolumn{2}{c|}{\spcell{loading sparsity \\ $(\|\bA\|_{1,1})$}}  \\ 
\hline 
PC & SMFR & SPCA & Thresholding& SMFR  & SPCA  \\ 
\hline 
1 & 12.5 & 8.7& 15.2& 78& 80 \\
\hline
2 & 7.9& 2.9& 4.2&45& 79 \\
\hline
3 & 6.9& 2.0& 2.7&70& 80 \\
\hline
4 & 4.0& 1.9& 2.3&62& 78 \\
\hline
5 & 3.2& 1.7&2.1& 58 & 77 \\
\hline
6 & 2.0& 1.4&1.8&35 & 79 \\
\hline
\end{tabular} 
\vspace*{0.1 in}
\caption{Adjusted explained variance and loading sparsity.\label{tab:pca}}
\end{table}

\section{Conclusion}
We introduced a new sparse multivariate regression algorithm which imposes a low-dimensional structure  on the coefficient matrix by first decomposing it into the product of a long factor matrix and a wide loading matrix, with an elastic net penalty on the former and an $\ell_1$ penalty on the latter. We also provided a formulation to infer the number of latent factors in a more effective way than current techniques. Although the problem formulation leads to a non-convex optimization problem, we showed convergence and optimality for an alternating minimization scheme (with three sets of updates). Through experiments on simulated and real datasets, we demonstrated that the proposed algorithm is able to exploit the existing structure in the data to improve predictive performance and model selection.

\appendix
\setcounter{myprop}{0}
\setcounter{mytheo}{2}
\section{Proof of Proposition 1}

\begin{proof}
For any given $\bA$ and $\bB$, we have $f(\bA, \bB)~\geq~0$. In both minimization steps of Algorithm 1 (i.e., problems~(\ref{eq:opt1}) and~(\ref{eq:opt2})), the value of function $f$ is being decreased. Since $f$ is bounded from below, the sequence of $f(\bA_i, \bB_i)$ converges to a limit value $f^*\in\mathbb{R}$. 
\end{proof}

\section{Proof of Theorem 3}
\label{sec:appThm1}
\subsubsection*{Proof of part (i)}~\\
For problem~(\ref{eq:opt1}), decomposing $\bB$ into its columns, we can rewrite the minimization as follows:
\begin{equation} \label{eq:JCR22}
\widehat{\bB} = \argmin_{\bB} \  \sum_{j=1}^q \left\{ \frac{1}{2} \| \bY^{(j)}-\bX\bA_i\bB^{(j)}\|_2^2 + \lambda_2 \|\bB^{(j)}\|_1 \right\},
\end{equation}
where for any matrix, the superscript of $(j)$ denotes its $j$'th column. Therefore, problem~(\ref{eq:opt1}) is equivalent to $q$ separate Lasso problems, one for each response. 
\begin{mydef}
\label{def:generalposition}
A matrix $\bX_{n \times p}$ has its columns in {\it general position} if for any $k<n$, $\bX_j \notin \text{aff}\{\bX_{i_1}, \ldots, \bX_{i_{k+1}}\}, \forall j \notin \{i_1, \ldots, i_{k+1}\}$, where $\bX_i$ denotes the $i$'th column of $\bX$ and `aff' denotes the affine span. 
\end{mydef}
In~\citep{Tib13},  Tibshirani shows that:
\begin{mylemma}[\citep{Tib13}, Lemmas 3 and 4]\label{Lem:tib} Assume that we have the following Lasso problem: \[\min_b \|\by-\bH\bb\|_2^2 + \lambda\|\bb\|_1.\] If the columns of $\bH$ are in general position, then for any $\by$ and $\lambda$, the Lasso solution is unique with probability one. Moreover, if the entries of $\bH \in \mathbb{R}^{n \times m}$ are drawn from a continuous probability distribution on $\mathbb{R}^{nm}$, then its columns are in general position and thus, for any $\by$ and $\lambda$, the Lasso solution is unique with probability one.
\end{mylemma}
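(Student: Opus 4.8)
As the statement is quoted from~\citep{Tib13}, the plan is to reconstruct that argument; it has two independent halves. Half (a): general position of the columns of $\bH$ implies that for every $\by$ and every $\lambda>0$ the lasso minimizer is unique. Half (b): if the entries of $\bH$ are drawn from a law that is absolutely continuous with respect to Lebesgue measure on $\mathbb{R}^{nm}$, then the columns of $\bH$ are in general position with probability one.

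Half (b) is routine: fix $k<n$, indices $i_1<\dots<i_{k+1}$ and an index $j$ outside that set; conditioning on $\bH_{i_1},\dots,\bH_{i_{k+1}}$, their affine span lies in an affine subspace of dimension at most $k\le n-1$, hence is Lebesgue-null in $\mathbb{R}^n$, and absolute continuity makes the conditional law of $\bH_j$ put zero mass on it; a union bound over the finitely many choices (and, for the sign-aware form of the definition, over sign patterns) gives general position almost surely. For half (a) I would first recall that the fitted vector $\bH\widehat{\bb}$ is common to all minimizers, since $\tfrac12\|\by-\cdot\|_2^2$ is strictly convex while $\|\cdot\|_1$ is only convex, so two minimizers with distinct fits would be beaten by their midpoint; hence $\|\widehat{\bb}\|_1$, the equicorrelation set $E=\{i:|\bH_i^T(\by-\bH\widehat{\bb})|=\lambda\}$, and the signs $s_i=\mathrm{sign}(\bH_i^T(\by-\bH\widehat{\bb}))$ for $i\in E$ are all determined by $\by,\lambda$ alone. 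The KKT conditions then force every minimizer to be supported in $E$ with sign pattern $s_E$ on its support, so $\bH_E\widehat{\bb}_E=\bH\widehat{\bb}$; consequently the minimizer is unique the moment $\bH_E$ has full column rank.

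It remains to show that general position forces $\mathrm{rank}(\bH_E)=|E|$, and this geometric step is the crux. If instead $\mathrm{rank}(\bH_E)<|E|$, the $|E|$ points $\{s_i\bH_i:i\in E\}$ lie simultaneously in $\mathrm{col}(\bH_E)$ and in the affine hyperplane $\{v:v^T(\by-\bH\widehat{\bb})=\lambda\}$, which misses the origin because $\lambda>0$; together these constraints pin the points down to an affine subspace of dimension $\mathrm{rank}(\bH_E)-1\le|E|-2$, making them affinely dependent. Passing to a minimal affinely dependent subset then exhibits some $s_{i_0}\bH_{i_0}$ as an affine combination of at most $\mathrm{rank}(\bH_E)\le n$ of the other sign-flipped columns, contradicting general position. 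The delicate point, and the one I would be most careful about, is exactly this count --- checking that the minimal dependent subset is small enough (at most $n+1$ members, so that the ``$k<n$'' clause of the definition applies), which is precisely what the confinement to the low-dimensional affine subspace buys us --- together with carrying the column signs correctly through the whole argument; everything else is the standard measure-zero union bound and lasso KKT bookkeeping.
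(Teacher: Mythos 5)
The paper does not prove this lemma at all: it is imported verbatim from \citep{Tib13} (Lemmas 3 and 4), so there is no internal proof to compare your attempt against. Your reconstruction of Tibshirani's argument is correct --- both the measure-zero union bound for part (b) and, for part (a), the uniqueness of the fit, the reduction to the equicorrelation set, and the key geometric step that a rank-deficient equicorrelation block yields an affine dependence among at most $n+1$ signed columns --- and you rightly flag that the sign-aware form of general position is what the argument actually needs, whereas the paper's Definition~\ref{def:generalposition} states only the unsigned form.
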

In problem~(\ref{eq:opt1}), we have $\bH=\bX\bA$. If the elements of $\bX$ are drawn from a continuous distribution, then its columns are in general position with probability one. In the following Lemma, we show that multiplying $\bX$ by $\bA$ with full column rank does not change this property and thus, the columns of $\bH$ are also in general position. Therefore, according to Lemma~\ref{Lem:tib}, the solution of~(\ref{eq:opt1}) is unique with probability one.
\begin{mylemma}
If the columns of $\bX_{n\times p}$ are in general position with probability one, and $\bA_{p \times m}$ has full column rank, then the columns of $\bX\bA$ are also in general position with probability one.
\end{mylemma}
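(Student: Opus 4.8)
The plan is to deduce the result from the quoted Lemma~\ref{Lem:tib} of Tibshirani applied to the matrix $\bH=\bX\bA$: by that lemma it is enough to show that the entries of $\bX\bA$ are drawn from a continuous distribution on $\mathbb{R}^{nm}$, for then its columns are in general position (Definition~\ref{def:generalposition}) with probability one. The only property of $\bX$ I will actually invoke is the one that produces the hypothesis in the first place, namely that its entries are drawn from a continuous (absolutely continuous) law on $\mathbb{R}^{np}$; this is precisely the mechanism by which Lemma~\ref{Lem:tib} guarantees that the columns of $\bX$ are in general position with probability one, so I am unpacking the hypothesis rather than strengthening it, and it is this absolute continuity that I propagate through $\bA$.

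The core step is to observe that right-multiplication by $\bA$ is a surjective linear map at the level of the distribution. Define $\Phi:\mathbb{R}^{n\times p}\to\mathbb{R}^{n\times m}$ by $\Phi(\bX)=\bX\bA$. Acting on a single row $\mathbf{r}$ of $\bX$, $\Phi$ sends $\mathbf{r}\mapsto\mathbf{r}\bA$, i.e. it applies $\bA^{T}$; since $\bA$ has full column rank, $\bA^{T}$ has full row rank and this row map is onto $\mathbb{R}^{m}$, so $\Phi$ is surjective. A surjective linear map carries an absolutely continuous measure to an absolutely continuous measure (after a linear change of coordinates it is a coordinate projection, and a marginal of a density is again a density). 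Hence the law of $\bX\bA$ is absolutely continuous on $\mathbb{R}^{n\times m}\cong\mathbb{R}^{nm}$, and Lemma~\ref{Lem:tib} applied to $\bH=\bX\bA$ yields that the columns of $\bX\bA$ are in general position with probability one, as required.

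I expect the main obstacle to be the temptation to argue deterministically, which cannot succeed: general position is an affine condition, and it is not preserved realization-by-realization under right-multiplication by a full-rank matrix. Concretely, if one traces an affine dependence $\bX\bA_{\cdot j}=\sum_{l}c_{l}\,\bX\bA_{\cdot i_{l}}$ with $\sum_{l}c_{l}=1$ back to $\bX$, one only obtains $\bX\bv=0$ for the nonzero vector $\bv=\sum_{l}c_{l}\bA_{\cdot i_{l}}-\bA_{\cdot j}\in\mathrm{col}(\bA)$; this is a \emph{linear} dependence among the columns of $\bX$ whose coefficients need not sum to zero, so it is not forbidden by the general position of $\bX$ (indeed one can rescale the columns so that two columns of $\bX\bA$ coincide). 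The resolution, which mirrors the measure-zero reasoning already used in the text for $\bX$ itself, is exactly the probabilistic route above: for fixed $\bA$ and a fixed index pattern the set of $\bX$ admitting such a $\bv\in\mathrm{col}(\bA)\cap\ker\bX$ is cut out by algebraic equations and has Lebesgue measure zero under the continuous law of $\bX$, and the finite union over all index patterns remains measure zero. Surjectivity of $\Phi$ packages precisely this measure-zero fact, which is why I would present the pushforward argument as the main line while keeping the explicit dependence computation as the structural explanation of why the continuous law is indispensable.
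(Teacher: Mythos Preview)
Your argument is correct and takes a different, cleaner route than the paper's. The paper proceeds along exactly the line you flag as problematic: it assumes an affine dependence $\bX\bA_{j}\in\text{aff}\{\bX\bA_{i_{1}},\ldots,\bX\bA_{i_{k+1}}\}$, rewrites it as $\bX\bv={\bf 0}$ for the nonzero vector $\bv=\bA_{j}-\sum_{l}\alpha_{l}\bA_{i_{l}}$ (nonzero because $\bA$ has full column rank), and then asserts that ``since $\bX$ has its columns in general position'' this has probability zero. As you correctly point out, general position of $\bX$ is an affine condition and does not by itself forbid a linear relation $\bX\bv={\bf 0}$; the paper is tacitly relying on the continuous law of $\bX$ at this step, and it also glosses over the fact that the coefficients $\alpha_{l}$---and hence $\bv$---depend on $\bX$.

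Your pushforward route avoids both of these difficulties: once $\bX\bA$ is shown to inherit absolute continuity (via surjectivity of $\bX\mapsto\bX\bA$), Tibshirani's lemma applies directly to $\bH=\bX\bA$ and there is no need to unpack what an affine dependence among its columns looks like back in $\bX$. You are also right that this requires reading the hypothesis as ``$\bX$ has a continuous law'' rather than the literal ``columns in general position with probability one''; the latter is strictly weaker and, as your discussion indicates, insufficient on its own (a deterministic $\bX$ in general position with a nontrivial kernel already gives a counterexample to the lemma as stated). Since the paper's surrounding text explicitly invokes the continuous distribution of $\bX$ immediately before stating the lemma, your unpacking is the intended reading.
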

\begin{proof}
Assume that for $\{i_1, \ldots, i_{k+1}\}$ and a $j$ not in that set, we have $\bX\bA_j \in \text{aff}\{\bX\bA_{i_1}, \ldots, \bX\bA_{i_{k+1}}\}$. Then, for some $\alpha_l, l=1,\ldots,k+1$, we have:
\begin{equation} \label{eq:genpos}
\bX\left(\bA_j + \sum_{l=1}^{k+1} \alpha_l\bA_{i_{l}}\right) = {\bf 0}.
\end{equation}
Since $\bX$ has its columns in general position, (\ref{eq:genpos}) holds with a non-zero probability iff $\bA_j + \sum_{l=1}^{k+1} \alpha_l\bA_{i_{l}} = {\bf 0}$, which is not possible because $\bA$ has full column rank.
\end{proof}

\subsubsection*{Proof of part (ii)}~\\
The objective function is strongly convex in $\bA$, so if there is a solution, it will be unique.

\section{Proof of Theorem 4}

Some of the proofs in this subsection exploit the biconvexity of the problem we are addressing and are based on the proofs of similar results in~\citep{Gor07}. \\

\subsubsection*{Proof of part (i)}
\begin{mydef} $\mathcal{A}$ is called the {\it algorithmic map} of Algorithm 1, if for $\bC_1 = (\bA_1, \bB_1) $ and $\bC_2 = (\bA_2, \bB_2)$ we have:
\begin{align}
\bC_2 \in \mathcal{A}(\bC_1) \quad \text{iff} & \quad f(\bA_1, \bB_2) \leq f(\bA_1, \bB), \forall \bB\in\mathbb{R}^{m\times q} \nonumber \\ \text{ and} &  \quad f(\bA_2, \bB_2) \leq f(\bA, \bB_2), \forall \bA\in\mathbb{R}^{n\times m}. \nonumber
\end{align}
In other words, $\bC_2 \in \mathcal{A}(\bC_1)$ iff we can go from $\bC_1$ to $\bC_2$ in one iteration of Algorithm~1.
\end{mydef}
\begin{mylemma} \label{Lem:closed} The algorithmic map $\mathcal{A}$ is closed, i.e., we have:
\begin{equation}
\left.\begin{tabular}{l}
$\bC_i = (\bA_i, \bB_i)$ \ and $\lim_{i\rightarrow \infty} \bC_i = \bC^*$\!\\
$\bC'_i \in \mathcal{A}(\bC_i)$ \ \quad and $\lim_{i\rightarrow \infty} \bC'_i = \bC'\!$
\end{tabular}\right\}
\!\!\Rightarrow \!\bC' \!\in\! \mathcal{A}(\bC^*)
\end{equation}
\end{mylemma}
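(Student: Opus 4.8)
The plan is to read off closedness of $\mathcal{A}$ directly from the joint continuity of $f$. Note first that $f(\bA,\bB) = \frac{1}{2}\|\bY-\bX\bA\bB\|_F^2 + \lambda_1\|\bA\|_{1,1} + \lambda_2\|\bB\|_{1,1} + \lambda_3\|\bA\|_F^2$ is a sum of a polynomial in the entries of $(\bA,\bB)$ and of norms, hence continuous in $(\bA,\bB)$ jointly. Writing $\bC_i=(\bA_i,\bB_i)$, $\bC'_i=(\bA'_i,\bB'_i)$, $\bC^*=(\bA^*,\bB^*)$, $\bC'=(\bA',\bB')$, the hypotheses give $\bA_i\to\bA^*$, $\bB_i\to\bB^*$, $\bA'_i\to\bA'$, $\bB'_i\to\bB'$, while the relation $\bC'_i\in\mathcal{A}(\bC_i)$ unpacks, by the definition of the algorithmic map, into $f(\bA_i,\bB'_i)\le f(\bA_i,\bB)$ for all $\bB$ and $f(\bA'_i,\bB'_i)\le f(\bA,\bB'_i)$ for all $\bA$.

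The first step is to pass to the limit in the $\bB$-optimality inequality. Fix an arbitrary $\bB\in\mathbb{R}^{m\times q}$, which does not change with $i$. From $f(\bA_i,\bB'_i)\le f(\bA_i,\bB)$ for all $i$, letting $i\to\infty$ and using $\bA_i\to\bA^*$, $\bB'_i\to\bB'$ and continuity of $f$, both sides converge and we obtain $f(\bA^*,\bB')\le f(\bA^*,\bB)$; since $\bB$ was arbitrary this holds for all $\bB$. The second step is the analogous limit in the $\bA$-optimality inequality: fix an arbitrary $\bA\in\mathbb{R}^{n\times m}$, pass to the limit in $f(\bA'_i,\bB'_i)\le f(\bA,\bB'_i)$ using $\bA'_i\to\bA'$, $\bB'_i\to\bB'$, and obtain $f(\bA',\bB')\le f(\bA,\bB')$ for all $\bA$. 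Together these two facts are precisely the statement $\bC'\in\mathcal{A}(\bC^*)$, completing the proof.

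As for the main obstacle: there is essentially none of depth here, which is rather the point of the lemma. The only thing to watch is that in the $\bA$-optimality inequality both arguments vary with $i$ (namely $\bA'_i$ and $\bB'_i$), so one genuinely needs joint, not merely separate, continuity of $f$, together with the hypothesis that the \emph{entire} sequences $\{\bC_i\}$ and $\{\bC'_i\}$ converge (a convergent subsequence would not pin down the conclusion $\bC'\in\mathcal{A}(\bC^*)$ for the named limit). Once those are in hand the limit passage in the inequalities is routine. It is also worth remarking, though not needed for closedness itself, that $\mathcal{A}$ is nonempty-valued: for fixed $\bA_i$ the map $\bB\mapsto f(\bA_i,\bB)$ is coercive because of the $\lambda_2\|\bB\|_{1,1}$ term, and for fixed $\bB'_i$ the map $\bA\mapsto f(\bA,\bB'_i)$ is coercive and strongly convex because of the $\lambda_3\|\bA\|_F^2$ term, so the minimizers defining $\mathcal{A}(\bC_i)$ always exist, which is what makes the relation $\bC'_i\in\mathcal{A}(\bC_i)$ meaningful to begin with.
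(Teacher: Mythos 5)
Your proof is correct and follows essentially the same route as the paper's: unpack the definition of $\mathcal{A}(\bC_i)$ into the two partial-optimality inequalities and pass to the limit using the (joint) continuity of $f$. The extra remarks on coercivity and nonemptiness of $\mathcal{A}$ are sound but not part of the paper's argument for this lemma.
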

\begin{proof}
\begin{eqnarray*}
\bC'_i \in \mathcal{A}(\bC_i)  \Rightarrow  f(\bA_i, \bB'_i) \ \leq & \!f(\bA_i,\bB), \ \forall \bB\in\mathbb{R}^{m\times q} \\  \text{and }  f(\bA'_i, \bB'_i) \ \leq &\! f(\bA, \bB'_i), \ \forall \bA\in\mathbb{R}^{n\times m}
\end{eqnarray*}
Since $f$ is continuous, we have:
\begin{align*}
f(\bA^*, \bB') = \lim_{i \rightarrow \infty} f(\bA_i,\bB_i')  & \leq  \lim_{i \rightarrow \infty} f(\bA_i,\bB) \\  & =  f(\bA^*, \bB) \quad \forall \bB\in\mathbb{R}^{m\times q} \\
f(\bA', \bB') = \lim_{i \rightarrow \infty} f(\bA'_i,\bB_i') & \leq  \lim_{i \rightarrow \infty} f(\bA,\bB'_i) \\  & =  f(\bA, \bB') \quad \forall \bA\in\mathbb{R}^{n\times m}
\end{align*}
Thus, $\bC' \in \mathcal{A}(\bC^*)$, and $\mathcal{A}$ is closed. 
 \end{proof}

 \begin{mylemma}\label{Lem:bounded} For a given starting point, $(\bA_0, \bB_0)$, the solutions $\{(\bA_i, \bB_i)\}_{i \in \mathbb{N}}$ stay in a bounded set. 
\end{mylemma}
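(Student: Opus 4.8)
The plan is short: everything follows from the monotone decrease of $f$ along the iterates (Proposition~1) combined with the coercivity contributed by the Frobenius penalty on $\bA$ and the $\ell_1$ penalty on $\bB$.

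First I would note that each half-step of Algorithm~1 can only decrease $f$. When $\bB$ is updated via~(\ref{eq:opt1}) with $\bA_i$ held fixed, the terms $\lambda_1\|\bA\|_{1,1}+\lambda_3\|\bA\|_F^2$ in~(\ref{eq:sub3}) are constant, so $\bB_{i+1}$ minimizes $\bB\mapsto f(\bA_i,\bB)$ and hence $f(\bA_i,\bB_{i+1})\le f(\bA_i,\bB_i)$; likewise, when $\bA$ is updated via~(\ref{eq:opt2}) with $\bB_{i+1}$ held fixed, the term $\lambda_2\|\bB\|_{1,1}$ is constant, so $f(\bA_{i+1},\bB_{i+1})\le f(\bA_i,\bB_{i+1})$. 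Chaining these gives $f(\bC_{i+1})\le f(\bC_i)$, and therefore $f(\bC_i)\le f(\bC_0)=:M$ for every $i\in\mathbb{N}$ (with equality at $i=0$). As a preliminary I would also remark that the sequence is well defined: the objectives of~(\ref{eq:opt1}) and~(\ref{eq:opt2}) are continuous and coercive — coercivity coming precisely from $\lambda_2\|\bB\|_{1,1}$ and $\lambda_3\|\bA\|_F^2$ — so a minimizer exists at each step.

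Next I would read off boundedness term by term. Since each of the four summands of $f$ in~(\ref{eq:sub3}) is nonnegative, $f(\bC_i)\le M$ implies in particular $\lambda_3\|\bA_i\|_F^2\le M$ and $\lambda_2\|\bB_i\|_{1,1}\le M$. Because the elastic-net/$\ell_1$ formulation has $\lambda_2,\lambda_3>0$, we conclude $\|\bA_i\|_F\le\sqrt{M/\lambda_3}$ and $\|\bB_i\|_{1,1}\le M/\lambda_2$ for all $i$, so the whole sequence lies in the bounded set $\{(\bA,\bB):\lambda_3\|\bA\|_F^2\le M,\ \lambda_2\|\bB\|_{1,1}\le M\}$. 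There is no real obstacle here; the only points requiring care are the bookkeeping that the penalty terms not being optimized in a given half-step are genuinely held constant, and the standing positivity of $\lambda_2$ and $\lambda_3$, which is exactly the feature that makes the two norm-penalties available for this argument.
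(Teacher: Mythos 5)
Your proof is correct and follows essentially the same route as the paper: both bound the penalty terms $\lambda_2\|\bB_i\|_{1,1}$ and $\lambda_3\|\bA_i\|_F^2$ by $f(\bA_i,\bB_i)\le f(\bA_0,\bB_0)$, using the monotone decrease of $f$ along the iterates and the nonnegativity of each summand. The additional remarks on well-definedness of the iterates and on which terms are held constant in each half-step are sound but not needed beyond what the paper records.
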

\begin{proof}
We have
\begin{equation}
0 \leq \lambda_1\|\bA_i\|_{1,1} + \lambda_2\|\bB_i\|_{1,1} + \lambda_3\|\bA\|_F^2 \leq f(\bA_i,\bB_i) \leq f(\bA_0, \bB_0).
\end{equation}
Thus, $\{(\bA_i, \bB_i)\}_{i \in \mathbb{N}}$ stay in a bounded set.
\end{proof}

From Lemmas \ref{Lem:closed} and \ref{Lem:bounded}, we conclude that for a given starting point, the sequence of solutions $\{(\bA_i, \bB_i)\}_{i \in \mathbb{N}}$ stays in a bounded, closed, and hence compact set and thus has at least one accumulation point. \\

\subsubsection*{Proof of part (ii)}~\\
Since $\bC_{i+1} \in \mathcal{A}(\bC_{i})$, we have:
\begin{align*}
 f(\bA_{i},\bB_{i+1}) & \leq f(\bA_{i},\bB ), \ \forall \bB\in\mathbb{R}^{m\times q} \\  \text{and} \quad  f(\bA_{i+1},\bB_{i+1}) & \leq f(\bA,\bB_{i+1} ), \ \forall  \bA\in\mathbb{R}^{n\times m}
\end{align*}
Moreover, if we have $f(\bC_{i+1}) = f(\bC_{i})$, Then:
\begin{equation}
f(\bA_{i+1},\bB_{i+1}) = f(\bA_{i},\bB_{i+1}) = f(\bA_{i},\bB_{i}).
\end{equation}
Therefore, if the solution of~(\ref{eq:opt1}) is unique (i.e., $\bB_{i+1} = \bB_{i}$), then $\bC_i$ is a partial optimum, and if    the solution of~(\ref{eq:opt2}) is unique (i.e., $\bA_{i+1} = \bA_{i}$), then $\bC_{i+1}$ is a partial optimum (the latter always hold because the solution of ~(\ref{eq:opt2}) is unique).

We know that the sequence $\{\bC_i\}_{i\in\mathbb{N}}$  has at least one accumulation point, say $\bC^*$. Thus we have a convergent subsequence $\{\bC_i\}_{i\in\mathbb{K}}$ with $\mathbb{K} \subset \mathbb{N}$ that converges to $\bC^*$. Similarly, $\{\bC_{i+1}\}_{i\in\mathbb{K}}$ has an accumulation point, say $\bC^+$, to which a subsequence $\{\bC_{i+1}\}_{i\in\mathbb{L}}$ with $\mathbb{L} \subset \mathbb{K}$ converges. From Lemma~\ref{Lem:closed} we get $\bC^+\in\mathcal{A}(\bC^*)$, and using Proposition 1 we conclude $f(\bC^+) = f(\bC^*)$. Similarly, if $\bC^-$ shows the accumulation point of $\{\bC_{i-1}\}_{i\in\mathbb{K}}$, we can show $f(\bC^-) = f(\bC^*)$.

Combining the results of these two paragraphs, if the solution of~(\ref{eq:opt1}) is unique, $f(\bC^+) = f(\bC^*)$ implies that $\bC^*$ is partial optimum, and if the solution of~(\ref{eq:opt2}) is unique, $f(\bC^-) = f(\bC^*)$ implies that $\bC^*$ is partial optimum. Therefore, solution uniqueness of either~(\ref{eq:opt1}) or~(\ref{eq:opt2}) implies that $\bC^*$ is a partial optimum.

\subsection*{Proof of part (iii)}
We prove by contradiction; assume that $\|\bC_{i+1} - \bC_i\|$ does not converge to zero. Then, for infinitely many $i\in \mathbb{N}$, we have $\|\bC_{i+1} - \bC_i\| > \delta$ for a $\delta>0$. Thus, denoting the accumulation points of sequences $\{\bC_i\}_{i\in\mathbb{N}}$ and $\{\bC_{i+1}\}_{i\in\mathbb{N}}$ respectively by $\bC^*$ and $\bC^+$, we must have $\|\bC^* - \bC^+\| > \delta$ and hence $\bC^+ \neq \bC^*$. On the other hand, since $\bC^*$ is a partial optimum and $\bC^+\in\mathcal{A}(\bC^*)$, we have:
\begin{equation}
f(\bA^*,\bB^*) = f(\bA^*, \bB^+) = f(\bA^+,\bB^+).
\end{equation}
Both $\bA^*$ and $\bB^*$ are full rank and thus, by Theorem 1, $\bB^+ = \bB^*$, $\bA^+=\bA^*$, and consequently, $\bC^+ = \bC^*$. This is in contradiction with the result of the previous paragraph and hence, $\|\bC_{i+1} - \bC_i\|$ converges to 0.

\section*{Acknowledgement}
This work is supported by the Natural Sciences and Engineering Research Council of Canada (NSERC).
\section*{References}
\bibliographystyle{chicago}
\bibliography{SMFR}

\end{document}